\documentclass{article}

\usepackage{microtype}
\usepackage{graphicx}
\usepackage{subcaption}
\usepackage{booktabs} 
\usepackage{amsmath,amssymb,amsfonts}
\usepackage[table]{xcolor}
\definecolor{mypink}{RGB}{255,192,203} 
\definecolor{myorange}{RGB}{255,165,0}

\usepackage{textcomp}
\usepackage{xcolor}
\usepackage{multirow}
\usepackage{booktabs}
\usepackage{makecell}
\usepackage{subcaption}

\usepackage{hyperref}

\usepackage[accepted]{icml2026}

\usepackage{amsmath}
\usepackage{amssymb}
\usepackage{mathtools}
\usepackage{amsthm}
\usepackage{hhline}

\usepackage[capitalize,noabbrev]{cleveref}

\theoremstyle{plain}
\newtheorem{theorem}{Theorem}[section]

\newtheorem{lemma}[theorem]{Lemma}

\theoremstyle{definition}

\newtheorem{assumption}[theorem]{Assumption}
\theoremstyle{remark}
\newtheorem{remark}[theorem]{Remark}

\usepackage[textsize=tiny]{todonotes}

\icmltitlerunning{BESplit: Bias-Compensated Split Federated Learning with Evidential Aggregation}

\begin{document}

\twocolumn[
  \icmltitle{BESplit: Bias-Compensated Split Federated Learning with\\ Evidential Aggregation}



  \icmlsetsymbol{equal}{*}

  \begin{icmlauthorlist}
    \icmlauthor{Yuhan Xie}{uni,lab}
    \icmlauthor{Chen Lyu}{uni,lab}
    \icmlauthor{Jingrong Huang}{uni}
  \end{icmlauthorlist}

  \icmlaffiliation{lab}{MoE Key Laboratory of Interdisciplinary Research of Computation and Economics, China}
  \icmlaffiliation{uni}{Shanghai University of Finance and Economics, China}

  \icmlcorrespondingauthor{Chen Lyu}{lyu.chen@mail.shufe.edu.cn}

  \icmlkeywords{Machine Learning, ICML}

  \vskip 0.3in
]



\printAffiliationsAndNotice{}  

\begin{abstract}

Split Federated Learning (SFL) enables privacy-preserving collaborative training by partitioning models between clients and a server. However, under non-IID data distributions, SFL often suffers from biased optimization and unstable convergence, while existing solutions largely adapt techniques from conventional federated learning. In this work, we observe that the split architecture of SFL inherently alters how client information is represented and coordinated, opening opportunities for bias compensation beyond parameter-level aggregation. Based on this insight, we propose \textbf{BESplit}, an architecture-aware framework that exploits the intrinsic structure of SFL to mitigate non-IID effects.  First, to prevent biased local data from dominating global updates, we introduce Evidential Aggregation (EA) to perform fine-grained reweighting of client contributions based on evidential uncertainty. Second, to further reduce distributional skew, we develop Bias-Compensated Collaboration (BCC) to align split-layer representations by pairing complementary clients. Finally, Dual-Teacher Distillation (DTD) is incorporated to synchronize knowledge between decoupled client and server models, enabling independent local inference. Extensive experiments on five benchmark datasets demonstrate that BESplit consistently outperforms state-of-the-art methods in accuracy, convergence stability, and computational efficiency under diverse non-IID settings.

\end{abstract}

\section{Introduction}

Split Federated Learning (SFL) \cite{lin2025hierarchical,xie2026healsplit} has recently emerged as an effective collaborative framework that unifies the advantages of Federated Learning (FL) \cite{wang2025federated,haripriya2025privacy} and Split Learning (SL) \cite{pham2025split}. By partitioning a neural network into client-side and server-side models, SFL enables distributed training. Specifically, each client executes partial forward propagation and transmits the resulting smashed data (i.e., intermediate representations) to the server. The server then completes the remaining computations and returns gradients for local updates. Both client-side and server-side parameters are subsequently aggregated through distinct coordination servers. This architecture alleviates client-side burdens while preserving data privacy by avoiding raw data exposure \cite{wu2024evaluating}.

Despite its benefits, the decentralized nature of SFL inherently renders it vulnerable to both system and data heterogeneity \cite{thapa2022splitfed, tirana2025minimization}. Existing studies \cite{liao2024parallelsfl,liao2024mergesfl, he2025hourglass,lin2026hasfl} have predominantly focused on alleviating system heterogeneity, which stems from disparities in clients' computational and communication capabilities, through cooperative or resource-aware training strategies. However, the challenge of data heterogeneity under non-Independent and Identically Distributed (non-IID) settings across clients has received comparatively less attention in SFL. Such non-IID distributions in SFL introduce gradient bias and inconsistent client optimization, resulting in slower convergence, higher computational costs, and degraded global performance \cite{xiao2024confusion}. Hence, mitigating these non-IID effects is pivotal for achieving practical and scalable SFL.

To address the impact of non-IID data in conventional FL, many prior works have adopted optimization-based strategies such as FedProx \cite{li2020federated}, SCAFFOLD \cite{karimireddy2020scaffold}, and FedDyn \cite{acar2021federated}, which constrain local updates or adjust gradient directions to align with the global objective. Other studies have explored representation-based approaches, including FedRDN \cite{yan2025simple}, FAVD \cite{kumar2025fortifying}, and Fed-MoE \cite{jiang2025heterogeneous}, to alleviate distributional skew by performing feature augmentation or refining knowledge transfer. Although these strategies developed for conventional FL can, in principle, be adapted to SFL due to their architectural similarities \cite{xie2026healsplit}, their effectiveness within the SFL paradigm remains largely unexplored. We hypothesize that the split architecture of SFL provides inherent advantages for mitigating non-IID effects, which have not yet been systematically exploited. This leads to a fundamental question: \emph{How can this split design be leveraged to handle client-side data heterogeneity more effectively?}

To answer this question, we make two key observations in SFL, each associated with a distinct challenge.
{First}, within the split architecture, the server inherently conducts server-side training based on the smashed data transmitted from participating clients in each communication round. Although privacy-preserving, these smashed data still retain rich statistical information about client distributions \cite{liao2024mergesfl,liao2024parallelsfl}. The key challenge, however, lies in how to effectively learn such representations to capture fine-grained client heterogeneity and improve global aggregation under non-IID conditions.
Second, since the server already receives smashed data from all participating clients, these intermediate representations naturally serve as a built-in medium for server-side coordination. By strategically aligning complementary features, the server can effectively compensate for biased local data distributions without additional client-side interaction.
While this setup enables potential benefits, the main challenge is how to identify and exploit such distributional complementarity to alleviate non-IID data bias. By leveraging these two characteristics, SFL provides a promising foundation for developing \emph{bias-compensated mechanisms} that move beyond conventional parameter aggregation.

Motivated by these insights, we propose \textbf{BESplit}, an architecture-aware SFL framework that systematically addresses non-IID challenges across three complementary dimensions: \emph{aggregation}, \emph{feature}, and \emph{model} levels. An overview of the BESplit components is provided in Table~\ref{tab:besplit_summary} and Fig.~\ref{fig:BESplit}. First, at the aggregation level, Evidential Aggregation (EA) is designed to stabilize global optimization by explicitly modeling client heterogeneity. For each client, EA introduces a Client State Record (CSR) to collect evidential statistics from smashed data using Evidential Deep Learning (EDL)~\cite{chen2024think}, including both evidence and aleatoric–epistemic uncertainties~\cite{xie2023dirichlet,shen2023post}. Based on these CSRs, EA adaptively reweights client updates to suppress the influence of biased local data. Second, at the feature level, Bias-Compensated Collaboration (BCC) is developed to reduce distributional skew by leveraging cross-client complementarity. Specifically, by utilizing smashed data and CSRs available at the server, BCC identifies clients with biased local distributions based on their divergence from the global distribution \cite{yan2025simple}. It then pairs these clients according to distributional complementarity to correct representation bias. Finally, informed by EA and BCC, Dual-Teacher Distillation (DTD) is incorporated to alleviate model divergence caused by decoupled client and server training. Each client learns a lightweight auxiliary model via knowledge distillation from two teachers \cite{yu2024select}: the local model for feature alignment and the global model for evidential guidance.
This design could preserve local specialization while ensuring global consistency, thereby enabling local inference without additional communication for model updates. The main contributions of this work are summarized as follows:

\begin{table}[t]
	\centering
	\caption{
		Overview of three components in BESplit. 
		Each module operates at a distinct operation level of SFL and tackles complementary aspects of heterogeneous non-IID data. 
		$\nu$ denotes the average proportion of each module's time cost relative to the total SFL runtime across all datasets.
	}
	\small
	\resizebox{0.8\columnwidth}{!}{%
		\begin{tabular}{lccc}
			\toprule
			\rowcolor{blue!8} 
			\textbf{Module} & \textbf{Operation Level} & \textbf{Location} & \textbf{$\nu$} \\
				\midrule
			\textbf{EA}  & Aggregation  & Server &  0.01\% \\
			\textbf{BCC} & Feature      & Server &  0.12\% \\
			\textbf{DTD} & Model        & Client &  3.65\% \\
			\bottomrule
		\end{tabular}
	}
	\label{tab:besplit_summary}
\end{table}

\begin{itemize}
	\item We explore the rarely studied problem of SFL under data heterogeneity, revealing its distinct challenges beyond conventional FL and offering insights into how to understand and solve this problem.

	\item We propose BESplit, a unified framework that integrates three complementary components: EA for quantifying client heterogeneity via evidential statistics, BCC for bias correction through client pairing, and DTD for aligning client and server models to support local inference via  dual-level knowledge distillation.

	\item Extensive experiments on five benchmark datasets demonstrate that BESplit consistently improves effectiveness, efficiency, and convergence stability, outperforming state-of-the-art FL and SFL baselines under heterogeneous non-IID data.

\end{itemize}

\section{Related Work}
\label{sec:related_work}

\subsection{Advances in Split Federated Learning }
The decentralized nature of SFL inherently exposes it to both system and data heterogeneity \cite{thapa2022splitfed, tirana2025minimization}. To alleviate system heterogeneity in edge environments, MergeSFL \cite{liao2024mergesfl} improves training efficiency by integrating feature merging and adaptive batch-size regulation, while ParallelSFL \cite{liao2024parallelsfl} adopts a cluster-based training strategy that partitions heterogeneous edge workers and assigns adaptive updating frequencies. 
HASFL \cite{lin2026hasfl} further improves convergence efficiency by jointly optimizing batch size and model split.
Hourglass \cite{he2025hourglass} focuses on scalability by enabling multi-GPU data-parallel training with shared model partition, thereby accelerating convergence and improving accuracy. However, while recent studies have made progress in mitigating system heterogeneity, the challenges of data heterogeneity under non-IID settings remain insufficiently explored in SFL.

\subsection{Federated Learning under Non-IID Data}

To mitigate the adverse effects of non-IID data in FL, prior studies have primarily followed two research directions: optimization-based and representation-based approaches.

Optimization-based methods mitigate client drift and stabilize convergence through refined update regularization. FedProx \cite{li2020federated} introduces a proximal term to constrain local objectives, while 
FedDyn \cite{acar2021federated} further applies dynamic regularization to adaptively balance local optimization and global convergence. In contrast, representation-based methods alleviate distributional bias and enhance feature consistency across clients. Fed-MoE \cite{jiang2025heterogeneous} iteratively refines server MoE experts and the gating network, activating relevant experts and promoting diversity to handle non-IID client data. FedRDN \cite{yan2025simple} refines local augmentation with global feature statistics, while FAVD \cite{kumar2025fortifying} introduces a privacy-preserving, auditable data valuation framework to fairly quantify client contributions during aggregation.

While effective, existing FL methods overlook the structural potential of smashed data. In SFL, the server's access to rich intermediate representations facilitates implicit client coordination, enabling bias-compensation mechanisms that effectively quantify and mitigate client heterogeneity under non-IID conditions.

\section{Preliminary}

\begin{figure*}[t]
	\centering
	\includegraphics[width=1\textwidth]{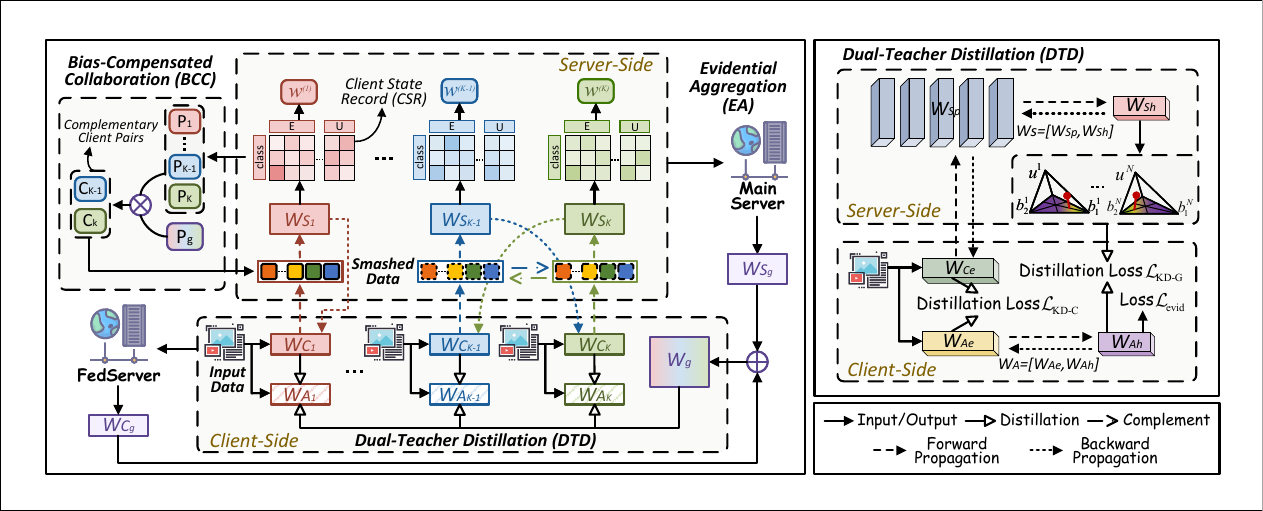}
	\vspace{-4mm}
	\caption{Illustration of BESplit. EA, BCC, and DTD modules are integrated into the SFL workflow. 
		In each communication round, client $C_k$ processes a mini-batch of local samples 
		through $W_{C_k}$ to produce smashed data, 
		which are sent to the corresponding $W_{S_k}$ for evidential prediction. 
		The server records these outputs in a client-specific CSR, capturing fine-grained prediction statistics. 
		Based on CSR updates, EA adaptively adjusts the aggregation weights $w^{(k)}$.  
		Then, the server estimates both the global data distribution $P_g$	and each client’s local distribution 
		$P_k$ from CSR entries, 
		and activates BCC mechanism to align subsets of smashed features between clients with complementary distributions (e.g., $C_{K-1}$ and $C_{K}$). 
		Meanwhile, each client maintains an auxiliary model $W_{A_k}$ via DTD, using $W_{C_k}$ and the global model $W_g$ as dual teachers.
	}
	\label{fig:BESplit}
\end{figure*}

\subsection{Problem Statement}

We consider a vanilla SFL system with \(K\) clients \(\{C_k\}_{k=1}^K\), each holding a private dataset \(\mathcal{D}_k \sim P_k\), where \(P_k\) denotes the local data distribution. For each client, \(\mathcal{D}_k\) is divided into a training set \(\mathcal{D}_k^{\mathrm{tr}} = \{(x_j, y_j)\}_{j=1}^{n_k^{\mathrm{tr}}}\) and a test set \(\mathcal{D}_k^{\mathrm{te}} = \{(x_j, y_j)\}_{j=1}^{n_k^{\mathrm{te}}}\). In practice, the local distributions \(P_k\) vary across clients and deviate from the global distribution \(P_g\), resulting in non-IID characteristics in SFL.

Each client model is split into a client-side model \(W_{C_k}\) and a server-side model \(W_{S_k}\), forming \(W_k = [W_{C_k}, W_{S_k}]\). The client-side model maps inputs \(x_j\) to smashed features \(z_j = W_{C_k}(x_j)\), which are transmitted to the server along with the corresponding labels for further processing. The main server aggregates server-side models \(W_{S_g} = A(\{W_{S_k}\}_{k=1}^K)\), while the fed server aggregates client-side models \(W_{C_g} = A(\{W_{C_k}\}_{k=1}^K)\), yielding the global model \(W_g = [W_{C_g}, W_{S_g}]\). 

Functionally, the client-side model \(W_C\) serves as a feature extractor \(W_{Ce}\)
that maps raw inputs to smashed data, while the server-side model \(W_S = [W_{Sp}, W_{Sh}]\) processes these representations through the feature processor \(W_{Sp}\) and outputs predictions via the classification head \(W_{Sh}\).

\paragraph{Objective.} 
The first objective is to learn a global model \(W_g\) that generalizes effectively across diverse heterogeneous client distributions:
\begin{equation}
	W_g^* = \arg\min_{W_g} \mathbb{E}_{(x,y) \sim P_g} \big[ \ell(f_{W_g}(x), y) \big],
\end{equation}
where \(f_{W_g}(\cdot)\) denotes the forward function of \(W_g\), and \(\ell(\cdot,\cdot)\) is a task-specific loss.

The second objective is to optimize a lightweight auxiliary model \(W_A\) for efficient client-side inference, 
aiming to achieve on-device prediction while maintaining comparable task performance to \(W_g\):
\begin{equation}
	W_A^* = \arg\min_{W_A} \mathbb{E}_{(x,y) \sim P_g} \big[ \ell(f_{W_A}(x), y) \big],
\end{equation}

\subsection{Dirichlet-based EDL Framework for SFL} \label{sec:pre}

\paragraph{Model Architecture.}
To capture uncertainty arising from client data heterogeneity in SFL, BESplit leverages a Dirichlet-based EDL framework~\cite{sensoy2018evidential}. Each client model $W_k$ produces a class-wise evidence vector $\mathbf{e} = \phi(f(x; W_k))$, where $\phi(\cdot)$ is a non-negative activation function. This yields Dirichlet parameters $\boldsymbol{\alpha} = \mathbf{e} + \mathbf{1}$, which define a Dirichlet distribution over class probabilities and encode both predictive belief and uncertainty. More details on the Dirichlet-based EDL framework are provided in Appendix~\ref{apxb}. Building on Subjective Logic~\cite{jsang2018subjective} and Dempster--Shafer theory~\cite{dempster1968generalization,shafer2020mathematical}, BESplit further decomposes uncertainty into fine-grained, complementary \textit{aleatoric}~\cite{xie2023dirichlet} and \textit{epistemic}~\cite{shen2023post} components.

Aleatoric uncertainty captures the inherent ambiguity of the data and is quantified as the expected categorical entropy under the Dirichlet distribution:
\begin{equation}
\begin{aligned}
	U_{\text{ale}}^{(k)}(x, W_{k}) &= \mathbb{E}_{Dir(p \mid x, W_{k})}[\mathcal{H}(P(y \mid p))] \\ 
	&= \sum_{i=1}^{N} \frac{\alpha_i}{S} \big[\psi(S+1) - \psi(\alpha_i+1)\big],
\end{aligned}
\end{equation}
where $\mathcal{H}(\cdot)$ denotes the Shannon entropy, $Dir(p\mid x, W_k)$ represents the Dirichlet distribution over class probability vector $p$ induced by input $x$ and model $W_k$, $P(y\mid p)$ is the categorical distribution parameterized by $p$, $\psi(\cdot)$ denotes the digamma function, and $S = \sum_i \alpha_i$ is the Dirichlet strength.

Epistemic uncertainty reflects model uncertainty, measured as the differential entropy of the Dirichlet distribution:
\begin{equation}
\begin{aligned}
	U_{\text{epi}}^{(k)}&(x, W_{k}) 
	= \mathcal{H}[{Dir}(p|x, W_k)] \\
	&=\sum_{i=1}^{N}\log\frac{\Gamma(\alpha_i)}{\Gamma(S)}-(\alpha_i-1)\big[\psi(\alpha_i)-\psi(S)\big],		
\end{aligned}
\end{equation}
where $\Gamma(\cdot)$ is the Gamma function.

\paragraph{Model Optimization.}
To implement EDL, we adopt the standard evidential learning loss, which includes a Kullback–Leibler (KL) regularizer to suppress misleading evidence from incorrect classes:
\begin{equation}
\begin{aligned}
	\mathcal{L}_{\text{evid}}(\boldsymbol{\alpha}, \mathbf{y}) 
	&= \sum_{i=1}^{N} y_i \,[\psi(S) - \psi(\alpha_i)] \\
	&\quad + \lambda_t \, KL\!\Big[ Dir(\mathbf{p}|\tilde{\boldsymbol{\alpha}}) \,\|\, Dir(\mathbf{p}|\mathbf{1}) \Big],
\end{aligned}
\end{equation}
where $\mathbf{y}$ is the one-hot representation of the ground-truth label, \(\tilde{\boldsymbol{\alpha}} = \mathbf{y} + (\mathbf{1}-\mathbf{y}) \odot \boldsymbol{\alpha}\) denotes the adjusted Dirichlet parameters, and \(\lambda_t = \min(1.0, t/T)\) is an annealing factor.

\section{Methodology}

\subsection{Evidential Aggregation} \label{sec:ea}

In SFL, non-IID client data often leads to imbalanced contributions during global aggregation. 
Conventional performance metrics (e.g., accuracy) may overemphasize confident and biased clients, 
thereby skewing the global update~\cite{nguyen2015deep, pei2024evidential}. 
To address this, we design EA, which builds upon the Dirichlet-based EDL framework (Sec.~\ref{sec:pre}) 
to leverage evidential statistics for interpretable and uncertainty-aware client weighting.

\paragraph{Client State Record.}  
During communication, the server tracks per-class evidential statistics for each client $k$, as summarized in Table \ref{tab:client_statistics} in Appendix \ref{apxc}. 
At round $t$, it collects local evidence and uncertainty statistics to form $\mathbf{R}_{t_c}$ 
and updates the stored record $\mathbf{R}$ via a staleness-aware exponential moving average (EMA):
\begin{equation}
	\mathbf{R} = 
	\beta_{\text{dec}} \, \mathbf{R}_{t_k} 
	+ (1 - \beta_{\text{dec}}) \, \mathbf{R}_{t_c},
\end{equation}
where $\mathbf{R}_{t_k}$ denotes the latest statistics from the previous participation round $t_k$,  
and $\beta_{\text{dec}} = \beta^{\max(1,\, t_c - t_k)}$ is the staleness-aware decay factor.

After the update, the server computes per-class averages by normalizing each statistic with its corresponding sample counts, forming
$\bar{\mathbf{R}} = \{\bar{\mathbf{E}}, \mathbf{M}, \bar{\mathbf{U}}_{\text{ale}}, \bar{\mathbf{U}}_{\text{epi}}\}$:
\begin{equation}
	\begin{aligned}
		& \bar{\mathbf{E}} = \text{diag}(\mathbf{M})^{-1} \mathbf{E}, &
		& \bar{\mathbf{U}}_{\text{ale}} = \mathbf{U}_{\text{ale}} / \mathbf{M}, &
		& \bar{\mathbf{U}}_{\text{epi}} = \mathbf{U}_{\text{epi}} / \mathbf{M}.
	\end{aligned}
\end{equation}
Where $\mathbf{E}$ denotes the per-class aggregated evidence obtained by accumulating instance-level evidence vectors $\mathbf{e}$ over the client's local samples, and $\mathbf{M}$ records the corresponding per-class sample counts.
These normalized statistics, derived from the Dirichlet-based EDL framework, characterize each client's average predictive evidence ($\bar{\mathbf{E}}$) and uncertainty ($\bar{\mathbf{U}}_{\text{ale}}, \bar{\mathbf{U}}_{\text{epi}}$).
They are further used to construct two complementary weighting factors: evidence concentration and uncertainty penalization.

\paragraph{Evidence Concentration.}  
To favor clients providing reliable class evidence, the server measures the concentration of evidence 
on correct predictions via $\bar{\mathbf{E}}^{(k)}$:
\begin{equation}
	q^{(k)}_i = \frac{\bar{E}^{(k)}_{i,i}}{\sum_{j=1}^N \bar{E}^{(k)}_{i,j} + \epsilon}, \quad
	Q^{(k)} = \frac{1}{N}\sum_{i=1}^N q^{(k)}_i,
\end{equation}
where $\bar{E}^{(k)}_{i,j}$ denotes the evidence assigned to class $j$ for samples with ground-truth label $i$, 
and $\epsilon>0$ ensures numerical stability. 
A higher $Q^{(k)}$ indicates more concentrated and consistent evidence across classes.

\paragraph{Uncertainty Penalization.}  
Clients with high aleatoric or epistemic uncertainty should contribute less to the global update. 
Thus, the server computes relative uncertainty ratios for each client:
\begin{equation}
	\begin{aligned}
		R^{(k)}_{\text{ale}} &= 
		\frac{\sum_{j=1}^{K} \sum_{i=1}^{N} \bar{U}^{(j)}_{\text{ale},i}}
		{\sum_{i=1}^{N} \bar{U}^{(k)}_{\text{ale},i} + \epsilon}, \quad
		R^{(k)}_{\text{epi}} =
		\frac{\sum_{j=1}^{K} \sum_{i=1}^{N} \bar{U}^{(j)}_{\text{epi},i}}
		{\sum_{i=1}^{N} \bar{U}^{(k)}_{\text{epi},i} + \epsilon}.
	\end{aligned}
\end{equation}
These ratios penalize clients whose evidential predictions exhibit high uncertainty.

\paragraph{Client Contribution.}  
Finally, each client’s contribution weight integrates evidence concentration and uncertainty penalization:
\begin{equation}
	s^{(k)} = Q^{(k)} \cdot R^{(k)}_{\text{ale}} \cdot R^{(k)}_{\text{epi}}, 
	\quad
	w^{(k)}= \frac{s^{(k)}}{\sum_{j=1}^K s^{(j)}}.
\end{equation}

This evidential weighting is designed to promote interpretable and robust aggregation by adaptively prioritizing client updates with higher predictive confidence, thereby mitigating the impact of biased local data.

\begin{figure}[t]
	\centering
	\includegraphics[width=0.48\textwidth]{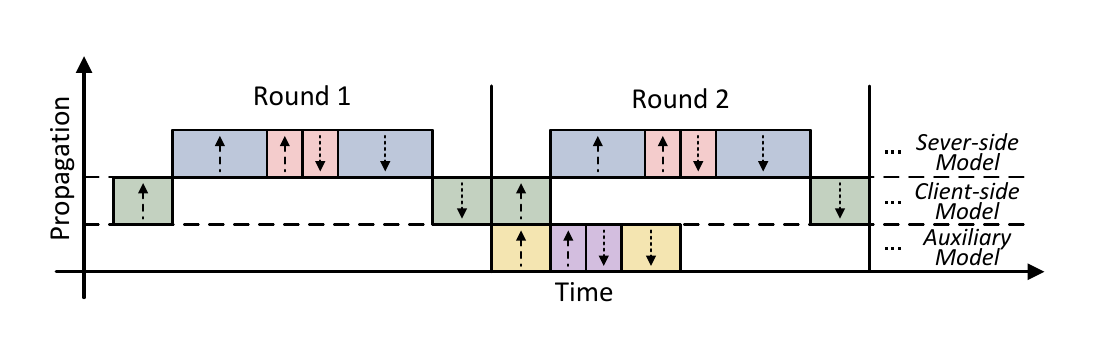}
	\vspace{-2mm}
	\caption{SFL process pipeline with parallel knowledge distillation. 
		\textit{Note:} The color scheme of the components is consistent with that used in Fig.~\ref{fig:BESplit}.}
	\label{fig:Parallelism}
\end{figure}

\begin{figure*}[t]  
	\centering
	\begin{minipage}[t]{0.66\textwidth}
		\vspace{0pt} 
		\centering
		\captionof{table}{Comprehensive overview of the experimental setup.}
		\resizebox{\textwidth}{!}{
			\begin{tabular}{lcccccc}
				\hline
				\rowcolor{blue!8} 
				\textbf{Dataset} & \parbox[c]{1cm}{{\centering \textbf{Total\\ Clients}}} & \parbox[c]{1.5cm}{\centering \textbf{Clients selected\\ per round}} &
				\parbox[c]{2cm}{\centering \textbf{Non-IID Dirichlet}\\\textbf{parameter $\kappa$}} &
				\textbf{Model} & 	\parbox[c]{1.4cm}{\centering \textbf{Epochs}}
				& 	\parbox[c]{1cm}{\centering \textbf{Batch}\\\textbf{Size}} \\ \cline{1-7}
				ISIC & 32 & 32 & \multirow{5}{*}{\parbox[c]{2.5cm}{\centering 0.1, 0.5,\\ 1 (Default),\\ 5, 10}} &DenseNet121 &100  & 32   \\ \cline{1-3} \cline{5-7}
				HAM10000 & 64 &64  &  &DenseNet121 &200  &32    \\ \cline{1-3} \cline{5-7}
				F-MNIST & 100  & 40  &  &ResNet18  & 300 & 32   \\ \cline{1-3} \cline{5-7}
				CIFAR10&100  & 70 &  & ResNet18 & 300 & 64    \\ \cline{1-3} \cline{5-7}
				CIFAR100 &1000  & 200 &  & ResNet50  & 500 & 64    \\
				\hline
		\end{tabular}}
		\label{tab:setting}
	\end{minipage}%
	\hfill
	\begin{minipage}[t]{0.31\textwidth}
		\vspace{0pt} 
		\centering
		\includegraphics[width=\textwidth]{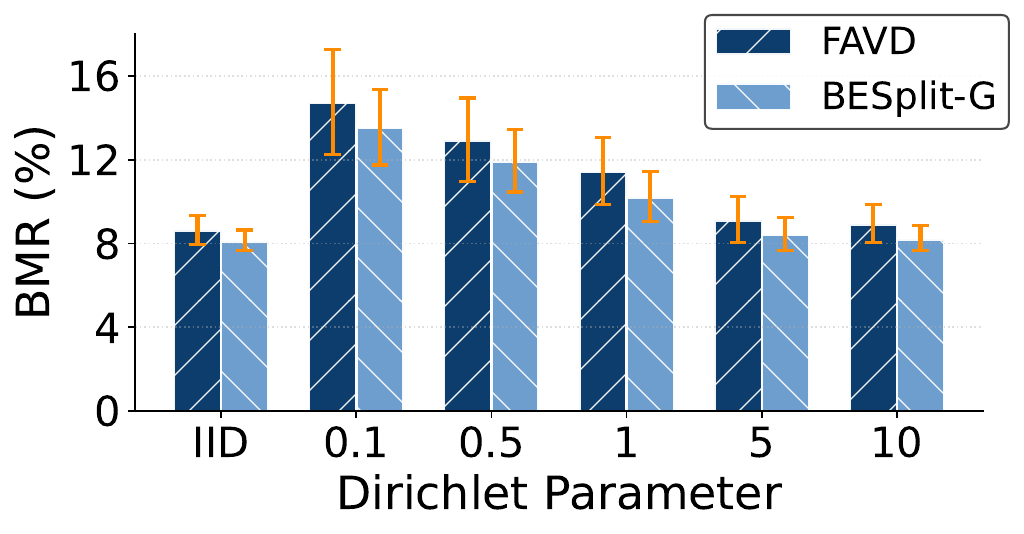}
		\vspace{-5mm}
		\captionof{figure}{BMR comparison under varying data heterogeneity on HAM10000.}
		\label{fig:BMR}
	\end{minipage}
\end{figure*}
\subsection{Bias-Compensated Collaboration}

To mitigate feature-level non-IID effects, we propose the BCC mechanism, which transforms distributional skew into mutually corrective interactions among clients. Inherently designed as an opportunistic, non-blocking mechanism rather than a synchronous waiting protocol, BCC efficiently operates in a practical streaming environment by decoupling heavy intermediate activations from lightweight CSRs. Rather than relying on ``blind'' pairings of asynchronous arrivals, the server continuously maintains the memory-bounded CSRs via an EMA (as described in Section \ref{sec:ea}), thereby preserving a persistent, low-cost approximation of the global distribution. Leveraging this global view, the server evaluates the active incoming stream to perform CSR-guided opportunistic matching on-the-fly, dynamically pairing the most complementary mini-batches.

First, the server quantifies each client's deviation from the global label distribution using the maintained CSR. Clients with significant divergence are identified as biased and grouped as
\begin{equation}
	\mathcal{B} = \{ C_k \mid d(P_k, P_g) > d_{(i^*)} \},
\end{equation}
where $d(\cdot, \cdot)$ denotes the Jensen--Shannon divergence, 
and the adaptive threshold $d_{(i^*)}$ is defined by the largest consecutive gap in the sorted divergences:
$i^* = \arg\max_i \big( d_{(i+1)} - d_{(i)} \big),$
with $d_{(1)} \le \dots \le d_{(K)}$.

Instead of down-weighting or excluding biased clients, BCC collaboratively leverages their complementary distributions to reduce bias. For any two  clients $C_i, C_j \in \mathcal{B}$, their distributional complementarity is defined as:
\begin{equation}
	\begin{aligned}
			\mathcal{N}_{i \to j} &= \{ (z, y) \in \mathcal{D}_i \mid y \in \mathcal{C}_{i \to j} \}, \\
			\mathcal{N}_{j \to i} &= \{ (z, y) \in \mathcal{D}_j \mid y \in \mathcal{C}_{j \to i} \}.
		\end{aligned}
\end{equation}
where $\|\cdot\|_1$ denotes the $\ell_1$ norm.

To exploit such complementarity, the server constructs a weighted graph over $\mathcal{B}$ with edge weights $\mathrm{G}(C_i, C_j)$, and applies a greedy matching algorithm~\cite{gupta2024greedy} to obtain a matching set $M$ of complementary client pairs. For each matched pair $(C_i, C_j)\in M$, the server identifies overrepresented classes:
\begin{equation}
	\mathcal{C}_{i \to j} = \{ n \mid p_{i,n} > p_{j,n} \}, \quad
	\mathcal{C}_{j \to i} = \{ n \mid p_{j,n} > p_{i,n} \},
\end{equation}
where and $\mathcal{C}_{i \to j}$ and $\mathcal{C}_{j \to i}$ correspond to the sets of classes overrepresented in $C_i$ and $C_j$, respectively.

After identifying these classes, the server extracts the corresponding smashed samples:
\begin{equation}
	\begin{aligned}
			\mathcal{N}_{i \to j} &= \{ (z, y) \in \mathcal{D}_i \mid y \in \mathcal{C}_{i \to j} \}, \\
			\mathcal{N}_{j \to i} &= \{ (z, y) \in \mathcal{D}_j \mid y \in \mathcal{C}_{j \to i} \}.
		\end{aligned}
\end{equation}

Subsequently, for each matched pair $(C_i, C_j) \in M$, BCC leverages up to $\rho\%$ of complementary class-wise feature signals to refine local representations, where $\rho$ is determined by the theoretical analysis in Appendix~\ref{appendix:rho_derivation}. The resulting gradients are aggregated and exclusively routed back to the originating client, thereby mitigating the risk of cross-client information leakage. This interaction induces a coordinated optimization signal that approximates training under a more balanced feature distribution.

It is worth noting that BCC introduces no negative effects even if a suitable complementary client is momentarily unavailable under extreme conditions. In such cases, the system does not wait; instead, BCC is simply skipped for that batch, seamlessly falling back to standard EA-based aggregation to ensure stable and continuous updates. Furthermore, the operational memory overhead remains exceptionally minimal and transient. The server is only required to hold two mini-batches simultaneously, resulting in a peak cost of $\mathcal{O}(B \cdot d)$ that is immediately released after the backward pass, where $B$ denotes the batch size and $d$ represents the dimensionality of the smashed data. Because this overhead is bounded purely by the server's concurrent processing capacity rather than the total client population, BCC scales efficiently without imposing noticeable memory or latency burdens compared to standard SFL.

\subsection{Dual-Teacher Distillation}

To enable local inference while mitigating model divergence, we propose DTD, a distillation mechanism tailored to the split architecture of SFL. DTD allows each client to leverage its local data while remaining aligned with the global model through dual-teacher guidance. Fig.~\ref{fig:Parallelism} demonstrates that the auxiliary model is trained in parallel with SFL,
allowing distillation to proceed concurrently without introducing additional communication overhead.

Specifically, each client maintains a lightweight auxiliary model, denoted as $W_A = [W_{Ae}, W_{Ah}]$. 
Under dual-teacher distillation, the feature extractor $W_{Ae}$ is aligned with the client-side model $W_C$ to preserve local knowledge, while the classification head $W_{Ah}$ is aligned with the global model $W_g$ to capture cross-client patterns.
During inference, predictions can be generated using either $W_A$ or $W_g$. In low-connectivity scenarios, $W_A$ enables fully local inference without online servers or full model downloads, facilitating deployment under heterogeneous resource constraints.

\paragraph{Evidential-Level Distillation with Global Model.}

To capture global predictive patterns, $W_A$ is trained to mimic  $W_g$ at the evidential level, aligning their predictive distributions to enhance cross-client consistency under heterogeneous data. The evidential distillation loss is defined as
\begin{equation}
	\begin{aligned}
		\mathcal{L}_{\text{KD-G}} &= 
		\tau^2\text{KL}\big[\boldsymbol{p_a}(\tau) \,\|\, \boldsymbol{p_g}(\tau)\big] \\
		&= \tau^2\sum_{n=1}^{N} p_g^n(\tau) 
		\Big( \log p_g^n(\tau) - \log p_a^n(\tau) \Big),
	\end{aligned}
\end{equation}
where \(\boldsymbol{p}_a(\tau)\) and \(\boldsymbol{p}_g(\tau)\) are the class probability distributions obtained by temperature scaling with \(\tau\) applied to the predictions of \(W_A\) and \(W_g\), respectively.


\paragraph{Feature-Level Distillation with the Client-Side Model.}

To preserve client-specific representations, we align the pairwise relational matrices of batch features derived from \(W_C\) and \(W_{Ae}\), thereby enforcing structural consistency in the learned feature space while facilitating knowledge transfer.

The pairwise relational matrices are defined using Euclidean distances between batch features:
\begin{equation}
	M_C^{ab} = \|z_C^a - z_C^b\|_2^2, \quad
	M_{Ae}^{ab} = \|z_{Ae}^a - z_{Ae}^b\|_2^2,
\end{equation}
where $z_C^a$ and $z_{Ae}^a$ denote the $a$-th smashed data produced by $W_C$ and $W_{Ae}$, respectively, and $a,b \in \{1,\dots,B\}$ index samples within a batch of size $B$. Accordingly, $\boldsymbol{M}_C, \boldsymbol{M}_{Ae} \in \mathbb{R}^{B \times B}$ capture the pairwise relational structures among smashed samples within the batch.

The auxiliary feature extractor is optimized to align the relational structures with the client-side model:
\begin{equation}
	\mathcal{L}_{\text{KD-C}} =
	\tau^2 \mathrm{KL}\Big(
	\mathrm{LogSoftmax}(\boldsymbol{M}_C / \tau) \,, 
	\mathrm{Softmax}(\boldsymbol{M}_{Ae} / \tau)
	\Big),
\end{equation}
\paragraph{Overall Loss.}  
The total loss of the auxiliary model combines supervised EDL, feature-level distillation, and evidence-level distillation:
\begin{equation}
	\mathcal{L}_{\text{total}} =  \mathcal{L}_{\text{evid}} +  \lambda_C \mathcal{L}_{\text{KD-C}} + \lambda_G \mathcal{L}_{\text{KD-G}},
\end{equation}

\paragraph{Convergence Analysis.} We establish theoretical convergence guarantees for the proposed approach, with detailed derivations presented in Appendix~\ref{apxD}.

\section{Experiments}

\begin{table*}[t]
	\centering
	\caption{Performance comparison of BESplit-G and BESplit-A against baselines across datasets under Uniform and Non-IID settings ($\kappa=1$). \textbf{Bold} and \textcolor{blue}{\underline{underlined}} numbers indicate the best and second-best results, respectively. Results for $\kappa=0.1$ and $\kappa=0.5$ are reported in Appendix~\ref{apxf2}.}
	\resizebox{\textwidth}{!}{
		\begin{tabular}{lcccccccccc}
			\toprule
			\rowcolor{blue!8} 
			\textbf{Dataset $\rightarrow$} 
			& \multicolumn{2}{c}{\textbf{ISIC}} 
			& \multicolumn{2}{c}{\textbf{HAM10000}} 
			& \multicolumn{2}{c}{\textbf{F-MNIST}} 
			& \multicolumn{2}{c}{\textbf{CIFAR10 }} 
			& \multicolumn{2}{c}{\textbf{CIFAR100}} \\
			\cmidrule(lr){2-3} \cmidrule(lr){4-5} \cmidrule(lr){6-7} \cmidrule(lr){8-9} \cmidrule(lr){10-11}
			\rowcolor{gray!8} 
			\textbf{SFL Setting  $\rightarrow$} 
			& \multicolumn{2}{c}{$n$=32, $k$=32} 
			& \multicolumn{2}{c}{$n$=64, $k$=64} 
			& \multicolumn{2}{c}{$n$=100, $k$=40} 
			& \multicolumn{2}{c}{$n$=100, $k$=70} 
			& \multicolumn{2}{c}{$n$=1000, $k$=200} \\
			\cmidrule(lr){2-3} \cmidrule(lr){4-5} \cmidrule(lr){6-7} \cmidrule(lr){8-9} \cmidrule(lr){10-11}
			\rowcolor{yellow!8} 
			\textbf{Method $\downarrow$} 
			& \textbf{Uniform} & \textbf{NIID} & \textbf{Uniform} & \textbf{NIID} & \textbf{Uniform} & \textbf{NIID} & \textbf{Uniform} & \textbf{NIID} & \textbf{Uniform} & \textbf{NIID} \\
			\midrule
			FedAvg & 69.86\scriptsize{$\pm$0.46} & 60.28\scriptsize{$\pm$1.93} & 73.57\scriptsize{$\pm$0.37} &63.31\scriptsize{$\pm$0.63} & 81.83\scriptsize{$\pm$0.91} & 71.10\scriptsize{$\pm$1.34}  & 62.57\scriptsize{$\pm$1.15} & 50.55\scriptsize{$\pm$1.32} & 67.35\scriptsize{$\pm$0.68} &  62.29\scriptsize{$\pm$1.34} \\
			FedProx & 69.28\scriptsize{$\pm$0.53} & 60.39\scriptsize{$\pm$1.74} & 73.17\scriptsize{$\pm$0.71} & 63.32\scriptsize{$\pm$1.21} & 81.63\scriptsize{$\pm$1.50} & 71.21\scriptsize{$\pm$1.71} & 62.05\scriptsize{$\pm$1.40} & 50.71\scriptsize{$\pm$1.25} & 66.49\scriptsize{$\pm$0.74} &61.72\scriptsize{$\pm$1.58}\\
			FedDyn & 70.63\scriptsize{$\pm$0.42} & 62.30\scriptsize{$\pm$1.39} & 74.74\scriptsize{$\pm$0.89} & 64.51\scriptsize{$\pm$0.72} & 82.61\scriptsize{$\pm$1.13} & 72.35\scriptsize{$\pm$1.29} & 63.82\scriptsize{$\pm$0.91} & 54.02\scriptsize{$\pm$1.33} & 70.97\scriptsize{$\pm$0.48} & 63.66\scriptsize{$\pm$1.19}\\
			Fed-MoE & 72.45\scriptsize{$\pm$0.92} & 63.52\scriptsize{$\pm$1.40} & 75.24\scriptsize{$\pm$1.48} & 66.76\scriptsize{$\pm$1.01} & 83.59\scriptsize{$\pm$0.86} & 72.12\scriptsize{$\pm$1.01} & 65.13\scriptsize{$\pm$0.99} & 54.79\scriptsize{$\pm$1.07} & 71.33\scriptsize{$\pm$0.84} & 65.31\scriptsize{$\pm$1.14}\\
			FedRDN & 73.58\scriptsize{$\pm$0.63} & 64.11\scriptsize{$\pm$1.37} & 76.18\scriptsize{$\pm$1.21} & 67.23\scriptsize{$\pm$0.52} & 83.46\scriptsize{$\pm$1.56} & 73.28\scriptsize{$\pm$1.58} & 66.14\scriptsize{$\pm$1.11} & 56.89\scriptsize{$\pm$0.94} & 72.18\scriptsize{$\pm$1.10} & 66.28\scriptsize{$\pm$0.83}\\
			FAVD & 73.99\scriptsize{$\pm$0.76} & 64.52\scriptsize{$\pm$0.99} & 76.42\scriptsize{$\pm$1.54} & 67.90\scriptsize{$\pm$0.85} & 83.72\scriptsize{$\pm$1.24} & 73.48\scriptsize{$\pm$1.73} & 66.38\scriptsize{$\pm$1.12} & 56.13\scriptsize{$\pm$1.11} & \textcolor{blue}{\underline{72.77\scriptsize{$\pm$1.42}}} & \textcolor{blue}{\underline{66.60\scriptsize{$\pm$1.26}}}\\
			SplitFed& 69.02\scriptsize{$\pm$1.10} & 60.07\scriptsize{$\pm$1.21} & 72.95\scriptsize{$\pm$1.44} & 63.76\scriptsize{$\pm$1.27} & 81.29\scriptsize{$\pm$1.28} & 71.80\scriptsize{$\pm$1.61} & 62.31\scriptsize{$\pm$1.13} & 50.21\scriptsize{$\pm$0.91} & 67.29\scriptsize{$\pm$0.65} & 61.64\scriptsize{$\pm$0.96}\\
			MergeSFL & 70.58\scriptsize{$\pm$0.67} & 62.81\scriptsize{$\pm$1.29} & 74.21\scriptsize{$\pm$0.80} & 65.50\scriptsize{$\pm$0.86} & 82.50\scriptsize{$\pm$0.99} & 72.62\scriptsize{$\pm$1.29} & 64.26\scriptsize{$\pm$0.98} & 53.16\scriptsize{$\pm$1.67} & 71.47\scriptsize{$\pm$1.12} & 64.76\scriptsize{$\pm$1.45} \\
			ParallelSFL& 70.59\scriptsize{$\pm$0.74} & 62.32\scriptsize{$\pm$0.95} & 72.96\scriptsize{$\pm$0.67} & 65.07\scriptsize{$\pm$1.18} & 82.13\scriptsize{$\pm$1.21} & 72.48\scriptsize{$\pm$1.26} & 64.33\scriptsize{$\pm$0.81} & 53.60\scriptsize{$\pm$1.15} & 70.94\scriptsize{$\pm$1.26} & 64.84\scriptsize{$\pm$1.19}\\
			HASFL& 70.55\scriptsize{$\pm$0.82}&62.52\scriptsize{$\pm$1.00}&73.20\scriptsize{$\pm$1.19}&65.79\scriptsize{$\pm$0.90}&82.71\scriptsize{$\pm$.03}&72.43\scriptsize{$\pm$1.20}&64.17\scriptsize{$\pm$1.21}&53.60\scriptsize{$\pm$1.65}&71.17\scriptsize{$\pm$1.25}&64.14\scriptsize{$\pm$1.02}\\			
			\midrule
			\rowcolor{green!8} 
			\textbf{BESplit-G (ours)} & \textbf{75.26\scriptsize{$\pm$0.58}} & \textbf{67.45\scriptsize{$\pm$1.29}} & \textbf{78.51\scriptsize{$\pm$0.96}} & \textbf{70.46\scriptsize{$\pm$1.10}} & \textbf{84.45\scriptsize{$\pm$1.33}} & \textbf{74.89\scriptsize{$\pm$1.66}} & \textbf{68.47\scriptsize{$\pm$1.29}} & \textbf{59.82\scriptsize{$\pm$1.46}} & \textbf{73.19\scriptsize{$\pm$1.20}} & \textbf{68.17\scriptsize{$\pm$0.94}}\\ 
			\rowcolor{red!8} 
			\textbf{BESplit-A (ours)} & \textcolor{blue}{\underline{73.23\scriptsize{$\pm$0.65}}} & \textcolor{blue}{\underline{66.97\scriptsize{$\pm$1.63}}} & \textcolor{blue}{\underline{76.93\scriptsize{$\pm$1.78}}} & \textcolor{blue}{\underline{68.65\scriptsize{$\pm$1.48}}} & \textcolor{blue}{\underline{83.74\scriptsize{$\pm$1.41}}} & \textcolor{blue}{\underline{74.11\scriptsize{$\pm$1.43}}} & \textcolor{blue}{\underline{67.22\scriptsize{$\pm$1.84}}} & \textcolor{blue}{\underline{57.98\scriptsize{$\pm$1.37}}} & {71.44\scriptsize{$\pm$1.51}} & {66.15\scriptsize{$\pm$1.89}}\\   
			\bottomrule
	\end{tabular}}
	\label{tab:comparison}
\end{table*}

\begin{figure*}[!t]
	\centering
	\begin{minipage}{0.19\textwidth}
		\centering
		\includegraphics[width=\textwidth]{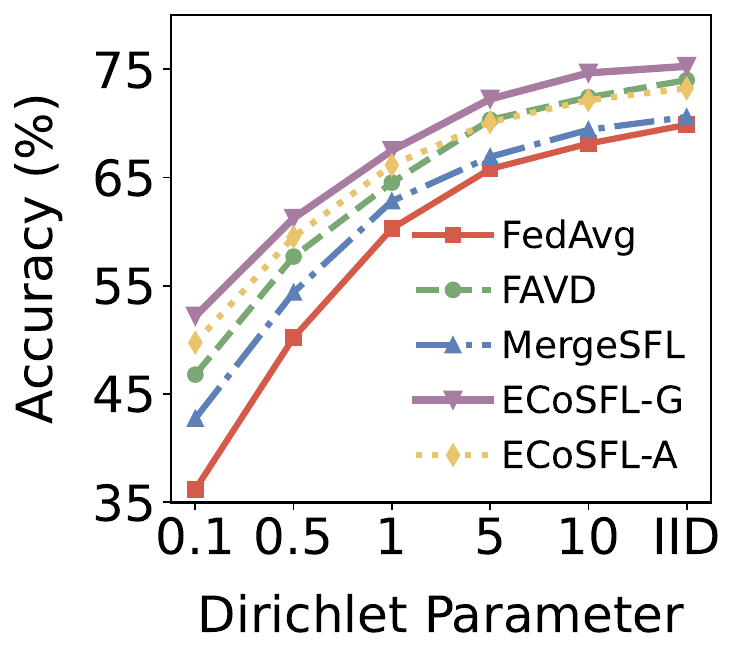}
		\captionsetup{labelformat=empty}
		(a) ISIC
	\end{minipage}%
	\hfill
	\begin{minipage}{0.19\textwidth}
		\centering
		\includegraphics[width=\textwidth]{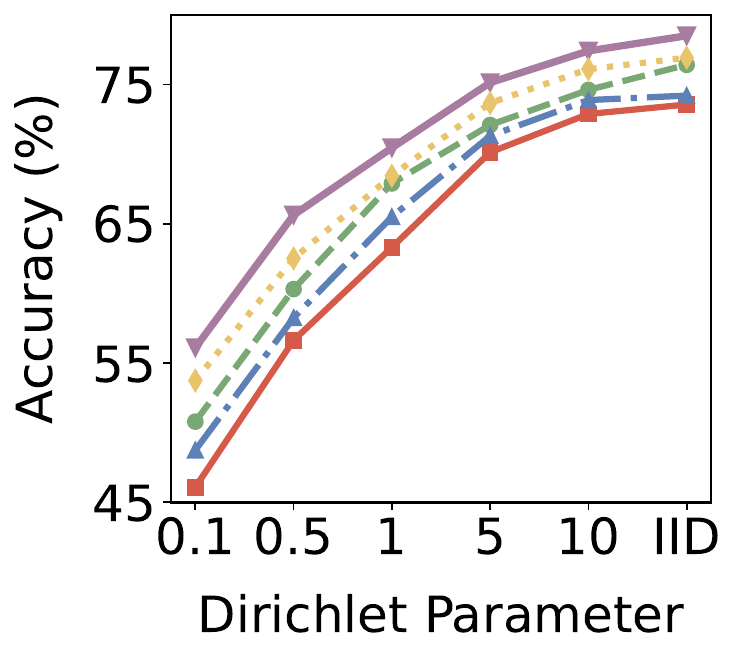}
		\captionsetup{labelformat=empty}
		(b) HAM10000
	\end{minipage}%
	\hfill
	\begin{minipage}{0.19\textwidth}
		\centering
		\includegraphics[width=\textwidth]{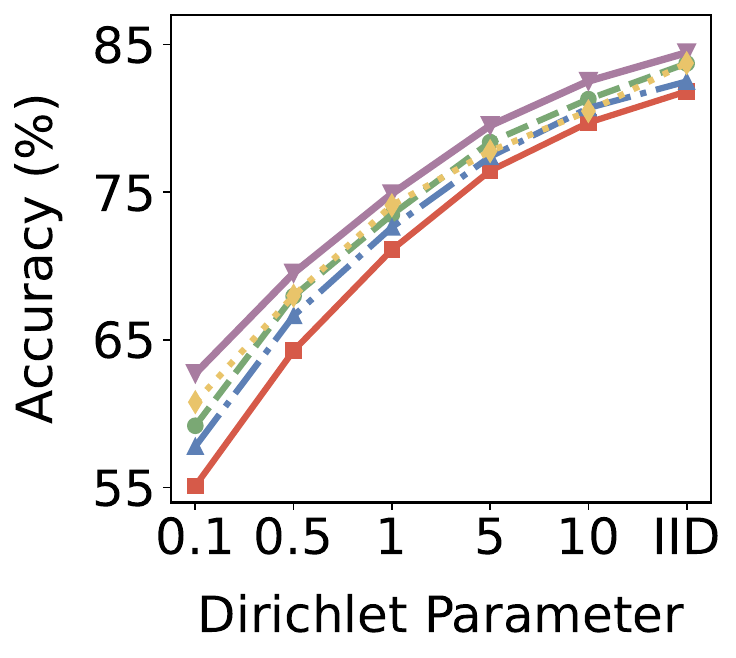}
		\captionsetup{labelformat=empty}
		(c) FMNIST
	\end{minipage}%
	\hfill
	\begin{minipage}{0.19\textwidth}
		\centering
		\includegraphics[width=\textwidth]{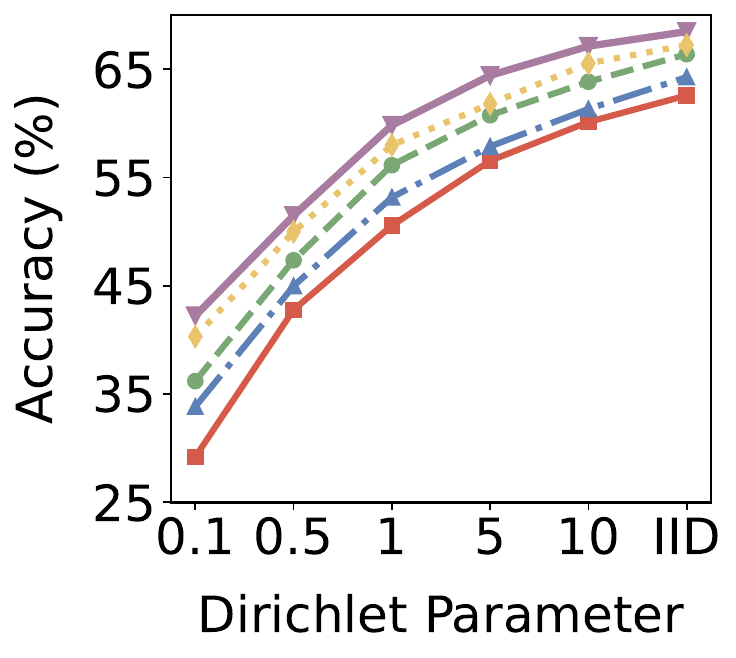}
		\captionsetup{labelformat=empty}
		(d) CIFAR10
	\end{minipage}%
	\hfill
	\begin{minipage}{0.19\textwidth}
		\centering
		\includegraphics[width=\textwidth]{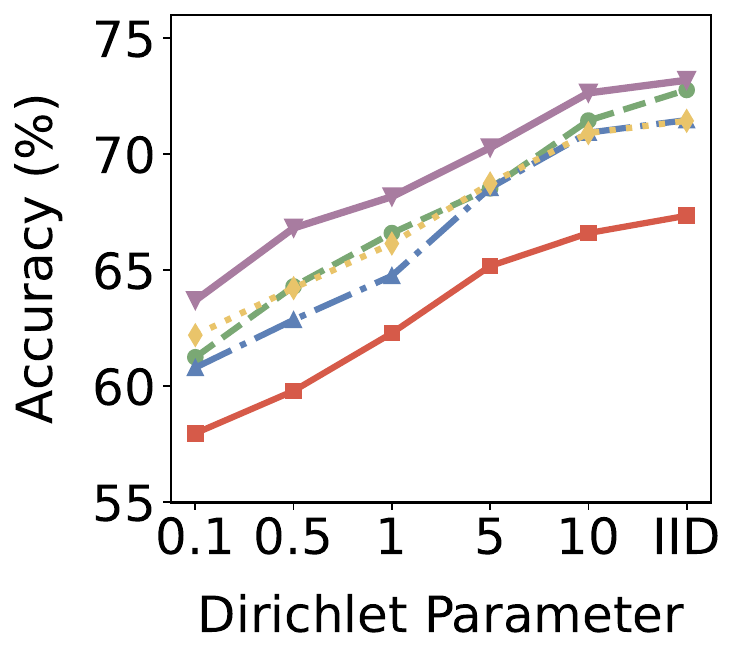}
		\captionsetup{labelformat=empty}
		(e) CIFAR100
	\end{minipage}
	\caption{Accuracy comparison of BESplit-G and BESplit-A under different non-IID levels.}
	\label{fig:NonIID}
\end{figure*}

\subsection{Experimental Setup}
\paragraph{Data.} We evaluate BESplit on five widely used datasets, including two medical imaging datasets, ISIC~\cite{ogier2022flamby} and HAM10000~\cite{tschandl2018ham10000}, and three natural image datasets, F-MNIST~\cite{xiao2017fashion}, CIFAR10, and CIFAR100~\cite{krizhevsky2009learning}. Detailed dataset descriptions are provided in Appendix~\ref{apxe3}. To simulate heterogeneous client distributions, we use a Dirichlet-based partitioning scheme, where the concentration parameter $\kappa$ controls the degree of non-IID skew across clients (see Appendix~\ref{apxe2} for details).

\paragraph{Baselines.} BESplit jointly optimizes two complementary objectives: BESplit-G (EA + BCC), which optimizes the global model, and BESplit-A (EA + BCC + DTD), which optimizes local auxiliary models for on-device inference. We evaluate the effectiveness of both BESplit-G and BESplit-A against a comprehensive set of approaches addressing data heterogeneity. This includes several advanced methods, such as FedAvg \cite{mcmahan2017communication}, FedProx, and FedDyn, as well as recent state-of-the-art (SOTA) methods, including Fed-MoE, FedRDN, and FAVD. In addition, we compare BESplit and BESplit-A with the latest SFL approaches, namely SplitFed \cite{thapa2022splitfed}, MergeSFL, ParallelSFL, and HASFL. Detailed descriptions of all baselines are provided in Appendix \ref{apxE1}.

\paragraph{Evaluation Metrics.} 
To comprehensively evaluate the performance of BESplit, we employ four metrics: 
(1) {Test Accuracy (ACC)},  reflecting the overall predictive performance; 
(2) {Round-to-Accuracy (RTA)}, the number of communication rounds to achieve a predefined target accuracy; 
(3) {Time-to-Accuracy (TTA)}, the total wall-clock time required to reach a target accuracy; 
and (4) {Benign Misclassification Rate (BMR)}, a metric specifically used in medical applications to assess model reliability and safety by measuring the rate of clinically non-benign misclassifications.

\paragraph{Implementation Details.}

All experiments were conducted on a single NVIDIA A100 GPU with 40~GB of memory. 
By default, the SGD optimizer was employed for all image classification datasets with a learning rate of \(1\times10^{-4}\). 
For the DTD stage, the distillation temperature was set to \(\tau = 5\), while the weighting coefficients were configured as \(\lambda_C = 0.2\) and \(\lambda_G = 0.3\), following the hyperparameter settings detailed in Section~\ref{Hyperparameter}. The primary experimental configurations for each dataset are summarized in Table~\ref{tab:setting}.

\begin{table}[t]
	\centering
	\caption{Results of ablation study across five datasets.}
	\resizebox{\columnwidth}{!}{
		\begin{tabular}{lccccc} 
			\toprule
			\rowcolor{blue!8} 		
			\textbf{Component}      & \textbf{ISIC} &\textbf{HAM10000}  &\textbf{F-MNIST}& \textbf{CIFAR10}& \textbf{CIFAR100}\\
			\midrule
			\rowcolor{green!8}
			\textbf{BESplit}   &  \textbf{67.45\scriptsize{$\pm$1.29}} & \textbf{70.46\scriptsize{$\pm$1.10}}& \textbf{74.89\scriptsize{$\pm$1.66}}& \textbf{59.82\scriptsize{$\pm$1.46}}&\textbf{68.17\scriptsize{$\pm$0.94}}\\
			\midrule
			w/o EA  & 64.34\scriptsize{$\pm$1.02} & 67.58\scriptsize{$\pm$1.05} & 73.04\scriptsize{$\pm$1.04} & 57.49\scriptsize{$\pm$0.80}& 65.52\scriptsize{$\pm$0.89}\\ 
			w/o $\mathbf{E}$  &  65.36\scriptsize{$\pm$0.85} & 68.15\scriptsize{$\pm$0.29}& 73.47\scriptsize{$\pm$0.34}& 57.93\scriptsize{$\pm$0.25}&66.42\scriptsize{$\pm$0.63}\\ 
			w/o $\mathbf{U}_{\text{ale}}$   & 66.84\scriptsize{$\pm$0.63} & 68.96\scriptsize{$\pm$0.34}& 74.38\scriptsize{$\pm$0.46}& 58.96\scriptsize{$\pm$0.80}& 67.23\scriptsize{$\pm$0.64} \\
			w/o $\mathbf{U}_{\text{epi}}$  & 66.54\scriptsize{$\pm$0.62} & 69.27\scriptsize{$\pm$0.73}& 74.11\scriptsize{$\pm$0.45}& 59.24\scriptsize{$\pm$0.52}& 67.75\scriptsize{$\pm$0.41}\\ 
			w/o BCC   & 63.59\scriptsize{$\pm$1.32} & 66.46\scriptsize{$\pm$1.48}& 72.14\scriptsize{$\pm$1.12}& 55.65\scriptsize{$\pm$1.44}& 64.28\scriptsize{$\pm$0.97} \\\bottomrule
	\end{tabular}}
	\label{tab_ablation}
\end{table}

\begin{table}[t]
	\centering
	\caption{Comparison of RTA on the CIFAR100 dataset. Each value represents the number of communication rounds required to achieve the target accuracy.
	}
	\label{tab:convergence}
	\resizebox{\columnwidth}{!}{
		\begin{tabular}{lcc|cc|cc|cc}
			\toprule
			\multirow{2}{*}{\parbox[l]{1cm}{{ \textbf{Target\\ Acc.}}}} &
			\multicolumn{2}{c|}{\cellcolor{blue!8}\textbf{FedAvg}} &
			\multicolumn{2}{c|}{\cellcolor{blue!8}\textbf{FAVD}} &
			\multicolumn{2}{c|}{\cellcolor{blue!8}\textbf{MergeSFL}} &
			\multicolumn{2}{c}{\cellcolor{blue!8}\textbf{BESplit}}  \\ 
			\cmidrule(lr){2-9}
			& \cellcolor{yellow!8}\textbf{Uniform} & \cellcolor{yellow!8} \textbf{NIID} 
			& \cellcolor{yellow!8} \textbf{Uniform} & \cellcolor{yellow!8} \textbf{NIID} 
			& \cellcolor{yellow!8} \textbf{Uniform} & \cellcolor{yellow!8} \textbf{NIID} 
			& \cellcolor{yellow!8} \textbf{Uniform} & \cellcolor{yellow!8} \textbf{NIID}  \\
			\midrule
			{15\%} & 16\scriptsize{$\pm$2} & 36\scriptsize{$\pm$3} & 14\scriptsize{$\pm$2} & 30\scriptsize{$\pm$3} & 15\scriptsize{$\pm$1} & 33\scriptsize{$\pm$2} & \textbf{13\scriptsize{$\pm$1}} & \textbf{29\scriptsize{$\pm$2}}  \\
			{30\%} & 49\scriptsize{$\pm$2} & 102\scriptsize{$\pm$5} & 39\scriptsize{$\pm$3}  & 88\scriptsize{$\pm$4}  & 44\scriptsize{$\pm$3}  & 92\scriptsize{$\pm$4}  & \textbf{35\scriptsize{$\pm$2}}  & \textbf{79\scriptsize{$\pm$3}}   \\
			{45\%} & 106\scriptsize{$\pm$4} & 179\scriptsize{$\pm$7}  & 91\scriptsize{$\pm$5} & 153\scriptsize{$\pm$6}  & 100\scriptsize{$\pm$3}  & 169\scriptsize{$\pm$5}  & \textbf{83\scriptsize{$\pm$3}}  & \textbf{134\scriptsize{$\pm$4}}   \\
			{60\%} & 282\scriptsize{$\pm$8} & 475\scriptsize{$\pm$12} & 257\scriptsize{$\pm$7} & 401\scriptsize{$\pm$10} & 268\scriptsize{$\pm$6} & 419\scriptsize{$\pm$9} & \textbf{247\scriptsize{$\pm$6}} & \textbf{371\scriptsize{$\pm$11}}  \\
			\bottomrule
	\end{tabular}}
\end{table}

\subsection{Comprehensive Performance and Robustness}

In this section, we evaluate BESplit on five datasets against ten state-of-the-art baselines under varying levels of data heterogeneity.

BESplit-G consistently outperforms all baseline models under non-IID conditions, achieving the best performance across all datasets. As shown in Table \ref{tab:comparison}, it exceeds classical FL algorithms, recent SOTA FL approaches, and SFL counterparts, demonstrating superior adaptability to heterogeneous data. On average, BESplit-G achieves a 3.0\% higher test accuracy than the best baseline under default non-IID settings, with the largest gains on the medical datasets ISIC and HAM10000, underscoring its suitability for sensitive medical imaging tasks.

To assess its robustness under increasingly skewed data distributions, we evaluate BESplit-G and representative baselines across varying non-IID intensities. As illustrated in Fig.~\ref{fig:NonIID}, BESplit-G consistently outperforms the baselines as the Dirichlet parameter decreases, especially under extremely non-IID settings, confirming its adaptability and increasing advantage under severe data heterogeneity. We further assess its reliability using the BMR metric under varying levels of data heterogeneity on the HAM10000 dataset. As shown in Fig.~\ref{fig:BMR}, BESplit-G consistently achieves lower BMR values than the strongest baseline, demonstrating stronger robustness and dependability in safety-critical medical prediction scenarios.

BESplit-A achieves competitive performance while supporting fully local inference, making it well suited for resource-constrained client devices. As shown in Table~\ref{tab:comparison} and Fig.~\ref{fig:NonIID}, this lightweight auxiliary model, whose compact architecture contains far fewer parameters than the global model (as detailed in Appendix~\ref{apxf1}), preserves local knowledge and effectively mitigates the impact of data heterogeneity.
Across varying non-IID intensities, BESplit-A consistently maintains strong performance, achieving 1.3\% higher accuracy than the best baseline under the default non-IID setting and performing closely to BESplit-G, demonstrating its effectiveness and deployability in practical SFL systems.

\begin{figure}[t]
	\centering
	\begin{minipage}{0.225\textwidth}
		\centering
		\includegraphics[width=\textwidth]{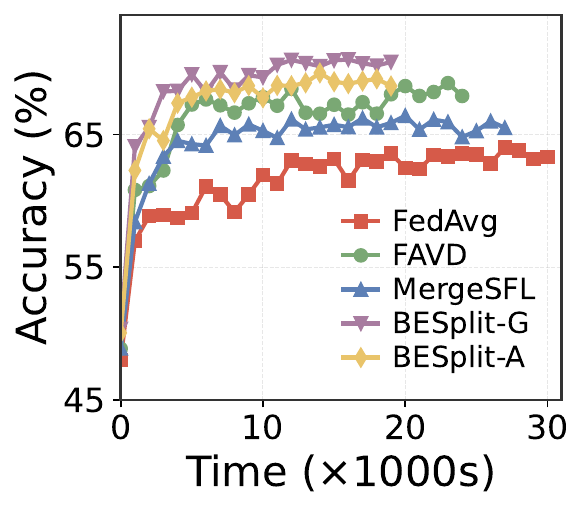}
		\captionsetup{labelformat=empty}
		(a) HAM10000
	\end{minipage}%
	\hfill
	\begin{minipage}{0.225\textwidth}
		\centering
		\includegraphics[width=\textwidth]{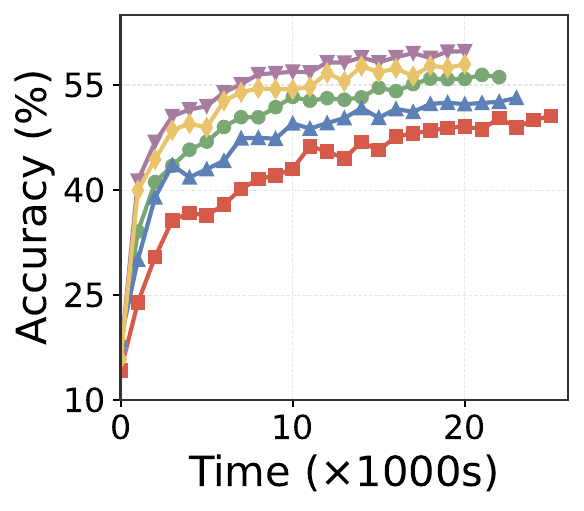}
		\captionsetup{labelformat=empty}
		(b) CIFAR10
	\end{minipage}
	\caption{Comparison of TTA on the HAM10000 and CIFAR10 datasets.}
	\label{fig:TTA}
\end{figure}

\subsection{Ablation Study}
We conduct ablation studies on five datasets to assess the contribution of each component in BESplit, with results presented in Table~\ref{tab_ablation}. Replacing EA with simple average aggregation or removing any EA submodule consistently degrades global alignment, indicating that EA stabilizes training and ensures more consistent global updates under non-IID data. The largest performance drop occurs when BCC is removed, underscoring its critical role in mitigating the effects of heterogeneous data distributions. Overall, these results indicate that BESplit’s robustness arises from the joint effect of EA and BCC.

\subsection{Efficiency Analysis}


\paragraph{Computational Cost.}  Computational cost is evaluated using the TTA metric, which measures the total wall-clock time required to reach the target accuracy. As shown in Fig.~\ref{fig:TTA}, BESplit consistently reaches the target accuracy faster than all baselines. Specifically, on the HAM10000 and CIFAR10 datasets, BESplit-G achieves 1.58× and 1.25× speedup over FedAvg, respectively, while also outperforming the strongest baseline in computation cost and achieving higher under heterogeneous SFL scenarios. These results validate the lightweight design of BEsplit, which effectively reduces computational overhead, enabling more scalable and efficient SFL training.

\paragraph{Convergence.} We evaluate the convergence efficiency of BESplit using the Round-to-Accuracy (RTA) metric. A lower RTA indicates faster convergence and, under comparable per-round transmission costs, higher communication efficiency. As shown in Table~\ref{tab:convergence}, BESplit consistently achieves the target accuracy in fewer rounds than all baselines on the CIFAR100 dataset. While SOTA baselines exhibit similar trends, they converge more slowly under non-IID conditions, and the gap widens as the accuracy target increases, reflecting accumulated optimization bias. In contrast, BESplit maintains stable improvement throughout training and achieves the lowest RTA across all targets, with particularly pronounced gains under non-IID scenarios. This demonstrates that BESplit achieves fast and stable convergence, effectively mitigating the impact of heterogeneous data.

\begin{figure}[t]
	\centering
	\begin{minipage}{0.32\columnwidth}
		\centering
		\includegraphics[width=\textwidth]{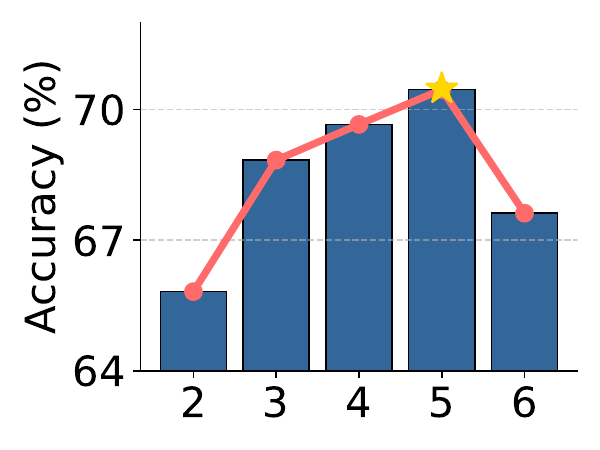}
		\captionsetup{labelformat=empty}
		(a) $\tau$
	\end{minipage}%
	\hfill
	\begin{minipage}{0.32\columnwidth}
		\centering
		\includegraphics[width=\textwidth]{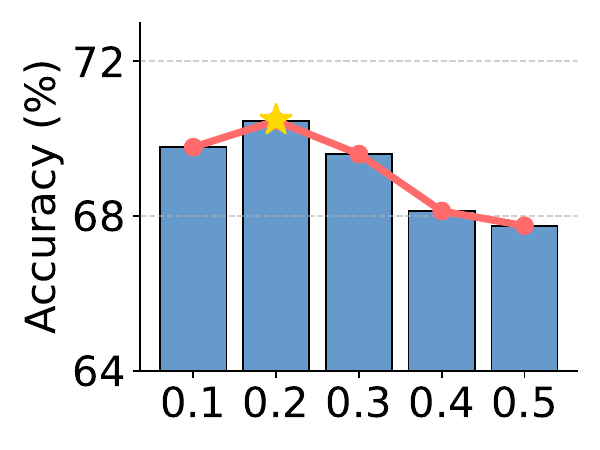}
		\captionsetup{labelformat=empty}
		(b) $\lambda_C$
	\end{minipage}%
	\hfill
	\begin{minipage}{0.32\columnwidth}
		\centering
		\includegraphics[width=\textwidth]{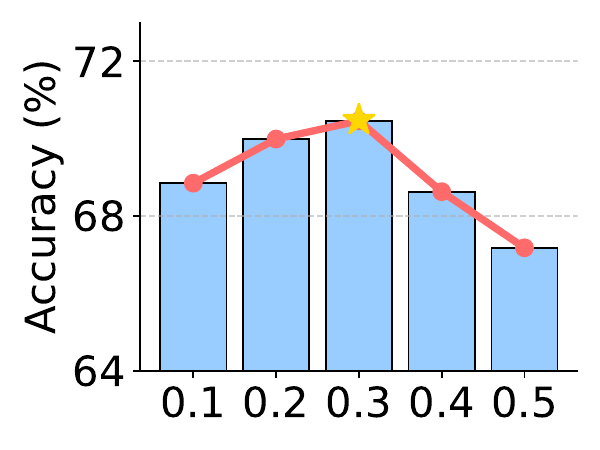}
		\captionsetup{labelformat=empty}
		(c) $\lambda_G$
	\end{minipage}
	\caption{Sensitivity analysis of BESplit on the F-MNIST dataset.}
	\label{fig:hyper}
\end{figure}

\subsection{Hyperparameter Sensitivity} \label{Hyperparameter}

We conduct a sensitivity analysis of BESplit on the F-MNIST dataset with respect to three key hyperparameters: the distillation temperature $\tau$, and the regularization coefficients $\lambda_C$ and $\lambda_G$. As illustrated in Fig.~\ref{fig:hyper}, BESplit consistently achieves stable performance across a wide and practically relevant range of values for all three parameters, exhibiting only minor fluctuations in accuracy. This suggests that BESplit is not overly sensitive to hyperparameter selection within reasonable settings, highlighting its robustness and practical reliability under non-IID conditions.
\section{Discussion}

Our framework combines three complementary mechanisms: EA, BCC, and DTD to systematically address data heterogeneity in SFL. EA allows clients to compute adaptive aggregation weights based on per-class evidence and uncertainty, improving robustness under non-IID conditions. BCC reduces distributional skew by leveraging complementary feature signals between biased clients, thereby mitigating class imbalance and enhancing convergence stability. DTD enables the auxiliary model to integrate knowledge from both client-side and global models while preserving evidential supervision, providing a principled approach to knowledge transfer.

In terms of privacy, although BESplit operates under a standard honest-but-curious server threat model, we acknowledge that the server's access to smashed data and labels carries inherent feature-inversion risks common to all SFL architectures. Nevertheless, our proposed CSRs introduce no additional instance-level attack surface or raw feature leakage. Instead, they only maintain coarse-grained, aggregated class-level statistics with minimal storage overhead (i.e., $\mathcal{O}(K \times N^2)$). Furthermore, to ensure that historical information gradually decays and to prevent long-term data retention, the memory-bounded CSRs utilize an EMA. In addition, a Time-To-Live (TTL) mechanism is applied to automatically discard statistics from inactive clients. Ultimately, BESplit strictly preserves the core federated mandate that raw sensitive data never leaves the local device. While building a formal defense against all reconstruction attacks remains an open challenge in SFL, the practical risks within BESplit could be naturally mitigated by server-side non-linearities, implicit loss mixing, and the bounded ratio $\rho$. Integrating Differential Privacy (DP) to further mask these aggregated statistics presents a promising, orthogonal direction for future work.

The framework also presents some limitations. First, the effectiveness of BCC relies on the estimation of distributional deviations and complementary pairings, and inaccuracies may affect the quality of guidance provided by complementary signals. Second, DTD benefits from appropriately selected distillation coefficients and temperature parameters to balance feature-level, evidential-level, and supervised losses, which may require adaptation to different datasets. Finally, the approach assumes partially complementary client data distributions, and performance may be influenced under extreme or highly heterogeneous conditions. Future work could investigate automated hyperparameter tuning and methods to enhance robustness in more challenging scenarios.

\section{Conclusion}
In this work, we tackle the challenge of non-IID data in Split Federated Learning (SFL).
Unlike prior methods, we explicitly exploit the split architecture to simultaneously address data heterogeneity at the aggregation, feature, and model levels. The proposed BESplit framework unifies EA, BCC, and DTD into a principled and robust solution for SFL under non-IID conditions.
Extensive experiments on five benchmark datasets show that BESplit achieves higher accuracy and faster convergence than state-of-the-art FL and SFL baselines, demonstrating superior effectiveness, scalability, and practical deployability.

\section*{Acknowledgment}
This work was supported by the National Key R\&D Program of China (2023YFA1009500).

\section*{Impact Statement}
This paper presents work whose goal is to advance the field of Machine
Learning. There are many potential societal consequences of our work, none
which we feel must be specifically highlighted here.

\nocite{langley00}

\bibliography{example_paper}
\bibliographystyle{icml2026}

\newpage
\appendix
\onecolumn

\section{Related Work}
\subsection{Uncertainty-aware Federated Learning}

Uncertainty \cite{han2022trusted} is generally categorized into two primary forms: aleatoric uncertainty \cite{xie2023dirichlet}, which arises from inherent data noise, and epistemic uncertainty \cite{shen2023post}, which stems from incomplete knowledge of the underlying model. Accurate quantification of both forms is essential for assessing the reliability and trustworthiness of deep learning systems \cite{pei2024evidential,liu2025uncertainty}.

In FL, uncertainty has been increasingly leveraged to enhance robustness, reliability, and decision-making. For instance, RIPFL \cite{qin2023reliable} employs Dempster–Shafer theory for interpretable client selection and Bayesian evidence for robust model aggregation, while Delphi \cite{aristodemou2025maximizing} explores parameter perturbation to expose potential trustworthiness vulnerabilities.  Among uncertainty estimation methods, EDL \cite{sensoy2018evidential} is a representative framework that utilizes deep neural networks (DNNs) \cite{xu2024reliable,yu2024evidential} to quantify predictive evidence. Building upon this foundation, recent studies have integrated EDL into FL to address heterogeneous and safety-critical scenarios. For example, RESFL \cite{wasif2025resfl} combines adversarial privacy disentanglement with fairness-aware aggregation for autonomous vehicle object detection. FedEvi \cite{chen2024fedevi} adopts Dirichlet-based EDL to disentangle epistemic and aleatoric uncertainties, thereby improving aggregation for medical image segmentation. Similarly, FEAL \cite{chen2024think} incorporates Dirichlet-based EDL into federated active learning \cite{kim2023re}, enabling joint modeling of both uncertainty types under domain shifts in medical applications.

Despite these advances, most existing EDL-based methods are designed specifically for conventional FL settings, with limited exploration in SFL. Given the unique architectural and optimization challenges of SFL, directly applying these approaches is non-trivial. This gap underscores the need for uncertainty-aware methods tailored to SFL to further improve its robustness and reliability.

\subsection{Knowledge Distillation}

Knowledge distillation (KD) \cite{sun2024logit,muralidharan2024compact,liu2024small} trains a compact student model to imitate a teacher model, enabling effective knowledge transfer for model compression. It has recently been extended to SFL to address system and data heterogeneity \cite{zhang2024fedgmkd}. For example, FedGKT \cite{he2020group} enables bidirectional knowledge transfer between edge and server models, integrating FL and SL while maintaining lightweight edge computation. FSMKD \cite{luo2024federated} further introduces mutual distillation between personalized local models and a shared global model. Similarly, Li et al. \cite{li2024introducing} align an auxiliary model with the main model to improve robustness under varying communication conditions.

In contrast to prior approaches that perform distillation within a single dataset \cite{muralidharan2024compact,yang2024learning}, rely on a single large teacher model \cite{liu2024small,pham2024frequency}, or assume multiple models sharing a common backbone \cite{luo2024federated}, BESplit allows each client to fully exploit its local data while aligning with the global model. This design ensures consistent performance across heterogeneous clients and better mitigates the challenges of highly non-IID data in SFL.
%
%

\section{Dirichlet-based Evidential Deep Learning for Uncertainty Modeling} \label{apxb}
 To enable interpretable fine-grained aggregation in SFL, we adopt a Dirichlet-based EDL framework grounded in evidential learning theory \cite{sensoy2018evidential}. This framework quantifies predictive uncertainty and improves the robustness of aggregation under heterogeneous non-IID client distributions~\cite{qu2024hyper}.

For an $N$-class classification task, the client's full model $W_k$ maps an input $x$ to logits $f(x, W_k)$, which parameterize a Dirichlet distribution over the class probabilities $\boldsymbol{p}$, thereby capturing second-order uncertainty:

\begin{equation}
	{Dir}(\boldsymbol{p} \mid \boldsymbol{\alpha}) =
	\begin{cases}
		\displaystyle
		\frac{1}{B(\boldsymbol{\alpha})} \prod_{i=1}^{N} p_i^{\alpha_i - 1}, & \boldsymbol{p} \in \Delta^N, \\[0.5em]
		0, & \text{otherwise},
	\end{cases}
\end{equation}
where $\Delta^N = \{\boldsymbol{p} \mid \sum_{i=1}^{N} p_i = 1, \ 0 < p_i < 1\}$ denotes the probability simplex, $B(\boldsymbol{\alpha})$ is the Beta function, and $\boldsymbol{\alpha} = [\alpha_1, \dots, \alpha_N]^\top$ are the Dirichlet parameters.

The Dirichlet parameters $\boldsymbol{\alpha}$ are obtained from non-negative evidence $\mathbf{e} = [e_1, \dots, e_N]^\top$ via a non-negative activation $\phi(\cdot)$:
\begin{equation}
	\mathbf{e} = \phi(f(z, W_{S})) \ge 0, \quad
	\boldsymbol{\alpha} = \mathbf{e} + \mathbf{1}, \quad
	S = \sum_{i=1}^{N} \alpha_i,
\end{equation}
where $S$ is the Dirichlet strength.

Following Subjective Logic~\cite{jsang2018subjective} and Dempster-Shafer theory~\cite{dempster1968generalization,shafer2020mathematical}, the Dirichlet distribution is mapped to belief and uncertainty measures:
\begin{equation}
	\boldsymbol{b} = \frac{\mathbf{e}}{S}, \quad
	u = \frac{N}{S}, \quad
	\mathbb{E}[p_i] = b_i + \frac{u}{N} = \frac{\alpha_i}{S},
\end{equation}
where $\boldsymbol{b} = [b_1, \dots, b_N]^\top$ represents the belief mass, and $u$ quantifies evidential uncertainty.

\section{Evidential Statistics} \label{apxc}
During each communication round, the server maintains class-wise evidential statistics for each client, as summarized in Table~\ref{tab:client_statistics}.
\begin{table}[h]
	\centering
	\caption{Class-wise statistics for each client in BESplit.}
	\resizebox{0.6\textwidth}{!}{%
		\begin{tabular}{lll}
			\toprule
			\rowcolor{blue!8} 
			\textbf{Statistic} & \textbf{Symbol}&  \textbf{Description} \\
			\midrule
			Evidence Matrix & $\mathbf{E} \in \mathbb{R}^{N \times N}$&  evidence per class \\
			Sample Count Vector & $\mathbf{M} \in \mathbb{R}^{N}$& number of samples per class \\
			Aleatoric Uncertainty Vector & $\mathbf{U}_{\text{ale}} \in \mathbb{R}^{N}$&  uncertainty from data per class\\
			Epistemic Uncertainty Vector & $\mathbf{U}_{\text{epi}} \in \mathbb{R}^{N}$&  uncertainty from model per class\\
			\bottomrule
	\end{tabular}}
	\label{tab:client_statistics}
\end{table}

\section{Theoretical Analysis} \label{apxD}
\subsection{Server-side Parameter Deviation Analysis}
\begin{assumption}
	For each class $n \in \{1,\dots, N\}$, the class-wise loss function
	$\mathcal{L}_n(W_S) = \mathbb{E}_{z \mid y = n} \big[\ell(f_{W_S}(z), y)\big]$
	is $\beta_n$-smooth with respect to the server-side model parameters $W_S$, i.e
	$$\|\nabla \mathcal{L}_n(W_{S_1})- \nabla \mathcal{L}_n(W_{S_2})  \| 
	\le \beta_n \| W_{S_1} - W_{S_2}\|.$$
	\label{ass:gradient_smooth}
\end{assumption}

\begin{assumption}
	There exists a constant $g_{\max} > 0$ such that, for all classes
	$n \in \{1,\dots,N\}$ and for all server-side model parameters $W_S$,
	the norm of the class-wise gradient is uniformly bounded, i.e.,
	\[
	\|\nabla \mathcal{L}_n(W_S)\| \le g_{\max}.
	\]
	\label{ass:bounded_gradient}
\end{assumption}

\begin{assumption}
	\label{ass:gradient_decomposition}
	
	We treat the empirical class proportion $P_{k}$ as an unbiased estimator of the underlying true label distribution. 
	In the SFL setting, clients typically possess sufficiently large local datasets, rendering the estimation error negligible.
	At communication round $t$, the server aggregates server-side model from $K$ clients.	
	The aggregated server-side gradient is $g_t = \sum_{k=1}^K w^{(k)} {g}_k$,
	where $w^{(k)}$ denotes  the aggregation weight for client $k$. 
	Conditioned on $W_{S_g}^{(t-1)}$, the expected aggregated gradient $\bar{g}_t = \mathbb{E}[g_t \mid W_{S_g}^{(t-1)}]$ can be decomposed into the global gradient and a bias term:
	$$\bar{g}_t= \nabla \mathcal{L}_{g}(W_{S_g}^{(t-1)}) + b_t,$$
	where \(\nabla \mathcal{L}_{g} = \sum_{n=1}^N P_{g,n} \nabla \mathcal{L}_n(W_{S_g}^{(t-1)})\)
	is the global gradient and $b_t$ captures the bias induced by heterogeneity in client  label distributions.  
	
	Moreover, the stochastic variance of the aggregated gradient is uniformly bounded:
	\[
	\mathbb{E}\Big\| g_t - \bar{g}_t \Big\|^2 \le \sigma^2.
	\]
\end{assumption}

\begin{lemma}[Gradient Bias Bound]
	Under Assumption~\ref{ass:bounded_gradient}, the gradient bias $b_t$ is bound as:
	\begin{equation}
		\|b_t\| = \|\bar{g}_t - \nabla \mathcal{L}(W_{S_g}^{(t-1)})\| \le \delta_{BCC} := g_{max} \sum_{k=1}^K w^{(k)} \|{P}_k - P_g\|_1.
	\end{equation}
	\label{lem:grad_bias}
\end{lemma}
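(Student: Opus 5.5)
The argument rests on writing each client's expected server-side gradient as a label-mixture of the class-wise gradients. Conditioned on $W_{S_g}^{(t-1)}$, client $k$ draws samples with labels distributed according to $P_k$ (Assumption~\ref{ass:gradient_decomposition} lets us treat the empirical class proportions as the true label distribution), so
\[
\mathbb{E}[g_k \mid W_{S_g}^{(t-1)}] = \sum_{n=1}^N P_{k,n}\,\nabla\mathcal{L}_n(W_{S_g}^{(t-1)}).
\]
Taking the conditional expectation of $g_t=\sum_k w^{(k)} g_k$ then gives
\[
\bar g_t=\sum_{k=1}^K w^{(k)}\sum_{n=1}^N P_{k,n}\nabla\mathcal{L}_n(W_{S_g}^{(t-1)}).
\]

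Next we subtract the global gradient. Because the EA weights satisfy $\sum_k w^{(k)}=1$ and are nonnegative, we can write
\[
\nabla\mathcal{L}_g=\sum_{n} P_{g,n}\nabla\mathcal{L}_n=\sum_k w^{(k)}\sum_n P_{g,n}\nabla\mathcal{L}_n.
\]
Hence
\[
b_t=\sum_{k=1}^K w^{(k)}\sum_{n=1}^N (P_{k,n}-P_{g,n})\,\nabla\mathcal{L}_n(W_{S_g}^{(t-1)}).
\]

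Finally we bound the norm. Applying the triangle inequality twice, using $w^{(k)}\ge 0$, and then Assumption~\ref{ass:bounded_gradient} termwise gives
\[
\|b_t\|\le\sum_k w^{(k)}\sum_n |P_{k,n}-P_{g,n}|\,\|\nabla\mathcal{L}_n\|\le g_{\max}\sum_k w^{(k)}\|P_k-P_g\|_1 =\delta_{BCC}.
\]

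The main obstacle is the first step: justifying that the expected client gradient is exactly the $P_k$-mixture of the class-wise gradients $\nabla\mathcal{L}_n$. This needs the class-conditional feature distribution $z\mid y=n$ to be shared across clients, i.e.\ heterogeneity is purely label skew, which is what makes $\mathcal{L}_n$ client-independent. It also needs all clients' server-side models to be evaluated at the common point $W_{S_g}^{(t-1)}$. I would state both as the modeling convention implicit in Assumption~\ref{ass:gradient_decomposition}, which defines $b_t$ as the bias induced by label-distribution heterogeneity.

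The remaining structural facts are automatic. The normalization $\sum_k w^{(k)}=1$ and $w^{(k)}\ge 0$ hold from the EA construction, since each $s^{(k)}$ is a product of nonnegative quantities. If BCC has rebalanced a client's effective label distribution, the same argument applies with $P_k$ replaced by the post-compensation proportions, which is why the bound is labeled $\delta_{BCC}$.
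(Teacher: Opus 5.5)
Your proof is correct and follows the same route as the paper's. You write $b_t=\sum_k w^{(k)}\sum_n (P_{k,n}-P_{g,n})\nabla\mathcal{L}_n(W_{S_g}^{(t-1)})$ and then apply the triangle inequality and the uniform bound $g_{\max}$; the difference is that you spell out the mixture identity for $\bar g_t$ and the properties $\sum_k w^{(k)}=1$, $w^{(k)}\ge 0$, which the paper's one-line first equality takes for granted.
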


\begin{proof}
	\begin{equation}
		\begin{aligned}
			\|\bar{g}_t - \nabla \mathcal{L}(W_{S_g}^{(t-1)})\| &= \left\| \sum_{k=1}^K w^{(k)} \sum_{n=1}^N ({P}_{k,n} - P_{g,n}) \nabla \mathcal{L}_n(W_{S_g}^{(t-1)}) \right\| \\
			&\le \sum_{k=1}^K w^{(k)} \|{P}_k - P_g\|_1 \max_n \|\nabla \mathcal{L}_n(W_{S_g}^{(t-1)})\|.
		\end{aligned}
	\end{equation}
	Applying Assumption~\ref{ass:bounded_gradient}, the result $\|\bar{g}_t - \nabla \mathcal{L}\| \le g_{max} \sum w^{(k)} \|{P}_k - P_g\|_1$ holds.
\end{proof}

\begin{lemma}[Server-side Parameter Deviation]
	Let $W_{S_c}^{(t)}$ denote the server-side parameter after the $t$-th update in the centralized setting. Under Assumptions~\ref{ass:gradient_smooth},
	the deviation between $W_{S_g}^{(t)} $ and  $W_{S_c}^{(t)}$ satisfies
	\begin{equation}
		\mathbb{E}\|W_{S_g}^{(t)} - W_{S_c}^{(t)}\| \le (1 + \eta L) \mathbb{E}\|W_{S_g}^{(t-1)} - W_{S_c}^{(t-1)}\| + \eta \delta_{BCC} + \eta \sigma.
	\end{equation}
\end{lemma}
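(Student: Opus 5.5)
We compare one step of the federated update with one step of the centralized update, both started from their own previous iterates, and split the difference into a drift term, a bias term and a noise term. Concretely, we take the updates $W_{S_g}^{(t)} = W_{S_g}^{(t-1)} - \eta g_t$ and $W_{S_c}^{(t)} = W_{S_c}^{(t-1)} - \eta \nabla \mathcal{L}_g(W_{S_c}^{(t-1)})$, with $\eta$ the server step size. Subtracting the two gives
\begin{equation*}
W_{S_g}^{(t)} - W_{S_c}^{(t)} = \big(W_{S_g}^{(t-1)} - W_{S_c}^{(t-1)}\big) - \eta\big(\nabla \mathcal{L}_g(W_{S_g}^{(t-1)}) - \nabla \mathcal{L}_g(W_{S_c}^{(t-1)})\big) - \eta\, b_t - \eta\,(g_t - \bar{g}_t).
\end{equation*}
This uses the decomposition $g_t = \nabla \mathcal{L}_g(W_{S_g}^{(t-1)}) + b_t + (g_t - \bar g_t)$ from Assumption~\ref{ass:gradient_decomposition}.

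The steps, in order, are as follows.
\begin{enumerate}
\item Apply the triangle inequality to the identity above.
\item Bound the gradient-difference term with smoothness. Since $\nabla\mathcal{L}_g = \sum_n P_{g,n}\nabla\mathcal{L}_n$ is a convex combination, Assumption~\ref{ass:gradient_smooth} makes $\nabla\mathcal{L}_g$ $L$-Lipschitz with $L := \max_n \beta_n$. (Indeed $\sum_n P_{g,n}\beta_n \le L$ would also suffice.) This gives $\eta\|\nabla \mathcal{L}_g(W_{S_g}^{(t-1)}) - \nabla \mathcal{L}_g(W_{S_c}^{(t-1)})\| \le \eta L\|W_{S_g}^{(t-1)} - W_{S_c}^{(t-1)}\|$, and together with the first term this produces the factor $(1+\eta L)$.
\item Bound the bias term by Lemma~\ref{lem:grad_bias}: $\eta\|b_t\| \le \eta\,\delta_{BCC}$.
\item Take total expectations. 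For the noise term, use Jensen's inequality and then the variance bound of Assumption~\ref{ass:gradient_decomposition}: $\mathbb{E}\|g_t-\bar g_t\| \le (\mathbb{E}\|g_t-\bar g_t\|^2)^{1/2} \le \sigma$.
\end{enumerate}
Collecting the three bounds gives the stated recursion.

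The main obstacle is interpretive rather than computational. The statement names $L$ without defining it, and the centralized trajectory is left implicit. We fix $L=\max_n\beta_n$ and take the centralized reference to be full-gradient descent on $\mathcal{L}_g$ with the same step size. We also must make sure the bias bound $\delta_{BCC}$ applies even though $w^{(k)}$ are data-dependent EA weights. Lemma~\ref{lem:grad_bias} holds pointwise for any weights in the simplex, so we treat $\delta_{BCC}$ as evaluated at the round-$t$ weights, or bound it by its supremum over rounds if a deterministic constant is preferred. Finally, the variance bound in Assumption~\ref{ass:gradient_decomposition} is stated unconditionally, so applying it inside the total expectation requires no extra care.
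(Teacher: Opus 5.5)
Your proposal is correct and follows the paper's proof essentially step for step. Both subtract the two update rules, apply the triangle inequality, and split $g_t - \nabla\mathcal{L}(W_{S_c}^{(t-1)})$ into three terms: the noise $g_t-\bar g_t$ (bounded by $\sigma$), the bias $\bar g_t - \nabla\mathcal{L}(W_{S_g}^{(t-1)})$ (bounded by $\delta_{BCC}$ via Lemma~\ref{lem:grad_bias}), and the gradient difference (bounded by $L$-smoothness); your explicit choice $L=\max_n\beta_n$ and the Jensen step $\mathbb{E}\|g_t-\bar g_t\|\le\sigma$ simply fill in details the paper leaves implicit.
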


\begin{proof}
	From the update rules
	$W_{S_g}^{(t)} = W_{S_g}^{(t-1)} - \eta g_t$
	and
	$W_{S_c}^{(t)} = W_{S_c}^{(t-1)} - \eta \nabla \mathcal{L}(W_{S_c}^{(t-1)})$,
	we have
	\begin{equation}
		W_{S_g}^{(t)} - W_{S_c}^{(t)}
		= (W_{S_g}^{(t-1)} - W_{S_c}^{(t-1)})
		- \eta \bigl( g_t - \nabla \mathcal{L}(W_{S_c}^{(t-1)}) \bigr).
	\end{equation}
	Taking norms and applying the triangle inequality yields
	\begin{equation}
		\|W_{S_g}^{(t)} - W_{S_c}^{(t)}\|
		\le
		\|W_{S_g}^{(t-1)} - W_{S_c}^{(t-1)}\|
		+ \eta \| g_t - \nabla \mathcal{L}(W_{S_c}^{(t-1)}) \|.
	\end{equation}
	
	Taking expectation and inserting intermediate gradients, we obtain
	\begin{equation}
		\begin{aligned}
			\mathbb{E}\| g_t - \nabla \mathcal{L}(W_{S_c}^{(t-1)}) \|
			\le\;&
			\mathbb{E}\| g_t - \bar g_t \|
			+ \|\bar g_t - \nabla \mathcal{L}(W_{S_g}^{(t-1)})\| \\
			&+
			\|\nabla \mathcal{L}(W_{S_g}^{(t-1)})
			- \nabla \mathcal{L}(W_{S_c}^{(t-1)})\|.
		\end{aligned}
	\end{equation}
	
	By Assumption~\ref{ass:gradient_decomposition},
	$\mathbb{E}\|g_t - \bar g_t\| \le \sigma$.
	The second term corresponds to the distributional bias controlled by BCC and is bounded by $\delta_{BCC}$.
	By $L$-smoothness,
	\begin{equation}
		\|\nabla \mathcal{L}(W_{S_g}^{(t-1)})
		- \nabla \mathcal{L}(W_{S_c}^{(t-1)})\|
		\le
		L \|W_{S_g}^{(t-1)} - W_{S_c}^{(t-1)}\|.
	\end{equation}
	Substituting the above bounds completes the proof.
\end{proof}

\begin{remark}[Motivation of BCC]
	Lemma~\ref{lem:grad_bias} establishes a rigorous theoretical basis showing that the server-side gradient deviation $b_t$ is strictly upper-bounded by the weighted $\ell_1$ divergence between local and global label distributions. Specifically, the inequality $|b_t| \le g_{max} \sum w^{(k)} |P_k - P_g|_1$ reveals that the discrepancy between the aggregated surrogate gradient and the true global gradient arises directly from local label skew. Consequently, reducing the distributional mismatch $|P_k - P_g|_1$ provides a mathematically grounded strategy for mitigating gradient drift.

	Motivated by this observation, BCC leverages complementary label distributions among biased clients to mitigate skew. Rather than discarding or down-weighting divergent clients, it pairs those with complementary overrepresented classes and redistributes a controlled fraction of their feature signals. Executed on the server, this alignment reduces overall label skew without extra communication, thereby lowering server-side gradient bias and fostering more balanced representations.
\end{remark}

\subsection{Guidence for Ratio $\rho^*$}
\label{appendix:rho_derivation}
For a matched client pair $(C_i, C_j)\in M$, the objective of BCC is to minimize their joint contribution to the global gradient bias induced by label distribution mismatch.
We focus on a single class $n$ and, without loss of generality, assume $P_{i,n} \ge P_{j,n}$.
The class-wise bias minimization problem is formulated as
\begin{equation}
	\min_{\rho}\;
	\mathcal F_{i,j,n}(\rho)
	= w_t^{(i)} \bigl| \hat P_{i,n}(\rho) - P_{g,n} \bigr|
	+ w_t^{(j)} \bigl| \hat P_{j,n}(\rho) - P_{g,n} \bigr|.
\end{equation}

At communication round $t$, the empirical label proportion is modeled as
$P_{k,n}^{(t)} = P_{k,n} + \varepsilon_{k,n}^{(t)}$,
where $\varepsilon_{k,n}^{(t)}$ denotes zero-mean sampling noise.
Accordingly, the historical estimate $P_k$ serves as a consistent and low-variance approximation of the underlying client distribution.
Moreover, since the adaptive aggregation weights $w_t^{(k)}$ are determined after the server-side update, we approximate them using their values from the previous round, i.e., $w_t^{(k)} \approx w_{t-1}^{(k)}$.

Under the BCC update rule, a fraction $\rho P_{i,n}$ of class-$n$ mass is transferred from client $i$ to client $j$, yielding
$\hat P_{i,n} = P_{i,n} - \rho P_{i,n}$ and
$\hat P_{j,n} = P_{j,n} + \rho P_{i,n}$.
Substituting these expressions into the objective leads to
\begin{equation}
	\min_{\rho}\;
	\mathcal F_{i,j,n}(\rho)
	=w^{(i)}_{t-1} \bigl| P_{i,n} - \rho P_{i,n} - P_{g,n} \bigr|
	+ w^{(j)}_{t-1} \bigl| P_{j,n} + \rho P_{i,n} - P_{g,n} \bigr|.
	\label{eq:approx_ij_bias}
\end{equation}

\begin{enumerate}
	\item \textbf{Case 1: $w^{(i)} > w^{(j)}$}. When client $i$ is assigned higher reliability, minimizing the joint bias prioritizes aligning its effective label proportion with the global distribution,
	resulting in
	\begin{equation}
		\rho_{i,j,n}^* = \frac{P_{i,n} - P_{g,n}}{P_{i,n}}.
	\end{equation}
	
	\item \textbf{Case 2: $w^{(j)} > w^{(i)}$}. When client $j$ is more reliable, the minimum is achieved by fully compensating its deviation from the global distribution,
	yielding
	\begin{equation}
		\rho_{i,j,n}^* =  \frac{P_{g,n} - P_{j,n}}{P_{i,n}}.
	\end{equation}
	
	\item \textbf{Case 3: $w^{(k)} = w^{(j)}$}. If both clients are equally weighted, the objective is flat within a feasible interval, and any
	$$\left[\frac{\min(P_{i,n}-P_{g,n}, P_{g,n}-P_{j,n})}{P_{i,n}}, \frac{\max(P_{i,n}-P_{g,n}, P_{g,n}-P_{j,n}))}{P_{i,n}}\right]$$  achieves the same minimal joint bias.
\end{enumerate}

Following the derivation of the optimal complementarity ratios $\rho^*_{i,j,n}$ for each matched client pair, we next formalize the practical implementation of BCC. For any candidate pair $(i,j)$, we define the edge weight in the bipartite matching graph as
\begin{equation}
	G(C_i, C_j) = \| P_i - P_g \|_1 + \| P_j - P_g \|_1 - \| (P_i - P_g) + (P_j - P_g) \|_1.
\end{equation}
This quantity measures the potential reduction in total $\ell_1$ deviation when clients $i$ and $j$ are jointly compensated.
Only pairs with complementary deviations from $P_g$ yield positive weights, while aligned or negligible deviations result in zero weight and are thus excluded from matching.

For analytical simplicity, we consider uniform aggregation weights $w^{(k)} = 1/K$.
Under this setting, the greedy algorithm prioritizes client pairs that maximize the reduction in the cumulative label deviation
$\sum_k \| P_k - P_g \|_1$.
By systematically shrinking global label imbalance, BESpit effectively controls the induced gradient bias
$\delta_{\mathrm{BCC}}$, leading to more stable and well-conditioned server-side optimization.

\subsection{Theoretical Guarantees for Convergence}

We now establish that BESplit converges to a stationary point under non-IID conditions.

\begin{theorem}
	Let the learning rate satisfy $\eta_t = \eta \le \frac{1}{4L}$. Under Assumption~\ref{ass:gradient_smooth} and \ref{ass:gradient_decomposition}, the sequence of server-side models $\{W_{S_g}^{(t)}\}_{t=0}^{T-1}$ generated according to $$W_{S_g}^{(t)} = W_{S_g}^{(t-1)} - \eta g_t,$$
	with $
	g_t = \sum_{k=1}^K w^{(k)} g_k $
	denoting the aggregated gradient and
	$w^{(k)}$ denoting the aggregation weights determined by the EA mechanism
	, satisfies
	\begin{equation}
		\min_{t \in \{0, \dots, T-1\}} \mathbb{E}[\|\nabla \mathcal{L}(W_{S_g}^{(t)})\|^2] \le \frac{4(\mathcal{L}(W_{S_g}^{(0)}) - \mathcal{L}_*)}{\eta T} + 4 \delta_{BCC}^2 + 4 L \eta \sigma^2,
	\end{equation}
	where $\mathcal{L}_*$ denotes the global minimum.
\end{theorem}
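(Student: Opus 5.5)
We follow the standard biased-SGD descent argument for an $L$-smooth objective. As in the deviation lemma, we take $\mathcal{L}=\mathcal{L}_g=\sum_n P_{g,n}\mathcal{L}_n$. By Assumption~\ref{ass:gradient_smooth}, $\mathcal{L}$ is $L$-smooth with $L\le\sum_n P_{g,n}\beta_n\le\max_n\beta_n$. We also use Lemma~\ref{lem:grad_bias} to control the bias, $\|b_t\|\le\delta_{BCC}$, at every round.

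\textbf{Step 1 (one-step descent).} Write $W_t:=W_{S_g}^{(t)}$ and apply the descent lemma to the update $W_{t}=W_{t-1}-\eta g_t$:
\[
\mathcal{L}(W_{t})\le \mathcal{L}(W_{t-1})-\eta\langle\nabla\mathcal{L}(W_{t-1}),g_t\rangle+\tfrac{L\eta^2}{2}\|g_t\|^2 .
\]
Take the expectation conditional on $W_{t-1}$. Assumption~\ref{ass:gradient_decomposition} gives $\mathbb{E}[g_t\mid W_{t-1}]=\nabla\mathcal{L}(W_{t-1})+b_t$ and $\mathbb{E}\|g_t\|^2\le\|\nabla\mathcal{L}+b_t\|^2+\sigma^2$.

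\textbf{Step 2 (handle the bias).} Bound the inner product by Young's inequality:
\[
-\langle\nabla\mathcal{L},\nabla\mathcal{L}+b_t\rangle\le-\tfrac12\|\nabla\mathcal{L}\|^2+\tfrac12\|b_t\|^2 .
\]
Bound the second moment by $\|\nabla\mathcal{L}+b_t\|^2\le 2\|\nabla\mathcal{L}\|^2+2\|b_t\|^2$. Since $\eta\le 1/(4L)$, we have $L\eta^2\le\eta/4$, so the quadratic term contributes at most $\tfrac{\eta}{4}\|\nabla\mathcal{L}\|^2+\tfrac{\eta}{4}\|b_t\|^2+\tfrac{L\eta^2}{2}\sigma^2$. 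Combining these bounds gives
\[
\mathbb{E}\mathcal{L}(W_t)\le\mathbb{E}\mathcal{L}(W_{t-1})-\tfrac{\eta}{4}\mathbb{E}\|\nabla\mathcal{L}(W_{t-1})\|^2+\tfrac{3\eta}{4}\delta_{BCC}^2+\tfrac{L\eta^2}{2}\sigma^2,
\]
where Lemma~\ref{lem:grad_bias} supplies $\|b_t\|^2\le\delta_{BCC}^2$.

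\textbf{Step 3 (telescope).} Sum the inequality over $T$ rounds and use $\mathcal{L}(W_T)\ge\mathcal{L}_*$. Multiply by $4/(\eta T)$ and bound the minimum by the average. This yields
\[
\frac{4(\mathcal{L}(W^{(0)})-\mathcal{L}_*)}{\eta T}+3\delta_{BCC}^2+2L\eta\sigma^2,
\]
which is dominated by the claimed bound with constants $4\delta_{BCC}^2$ and $4L\eta\sigma^2$.

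\textbf{Main obstacle.} The delicate point is that $\delta_{BCC}$ depends on the EA weights $w^{(k)}$, which vary with $t$. These weights must be $W_{t-1}$-measurable, or at least the bias bound must hold uniformly in $t$. I would treat $\delta_{BCC}$ as $\sup_t g_{\max}\sum_k w_t^{(k)}\|P_k-P_g\|_1$, which is finite since the weights lie in the simplex and $\|P_k-P_g\|_1\le 2$. A second point is the indexing: the theorem indexes the gradients from $t=0$ to $T-1$, so the sum should be aligned with the updates producing $W^{(1)},\dots,W^{(T)}$.
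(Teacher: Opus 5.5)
Your proposal is correct and follows the paper's proof step for step. It uses the same descent inequality, the same Young's bound $-\langle\nabla\mathcal{L},\bar g_t\rangle\le-\tfrac12\|\nabla\mathcal{L}\|^2+\tfrac12\delta_{BCC}^2$, the same second-moment bound $2\|\nabla\mathcal{L}\|^2+2\delta_{BCC}^2+\sigma^2$, and the same use of $\eta\le 1/(4L)$ to reach the tighter intermediate bound with $3\delta_{BCC}^2+2L\eta\sigma^2$, which implies the stated one. Your remarks on taking $\delta_{BCC}$ as a supremum over the time-varying EA weights, and on the indexing, make explicit what the paper assumes silently; note also that invoking Lemma~\ref{lem:grad_bias} tacitly requires Assumption~\ref{ass:bounded_gradient}, which the theorem does not list, exactly as in the paper.
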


\begin{proof}
	From the $L$-smoothness of $\mathcal{L}$ and the update rule $W_{S_g}^{(t)} = W_{S_g}^{(t-1)} - \eta g_t$, we have:
	\begin{equation}
		\mathbb{E}[\mathcal{L}(W_{S_g}^{(t)})] \le \mathbb{E}[\mathcal{L}(W_{S_g}^{(t-1)})] - \eta \langle \nabla \mathcal{L}(W_{S_g}^{(t-1)}), \bar{g}_t \rangle + \frac{L \eta^2}{2} \mathbb{E}[\|g_t\|^2].
		\label{eq:descent_start}
	\end{equation}
	
	The first-order term is bounded using Young's inequality:
	
	\begin{equation}
		- \langle \nabla \mathcal{L}(W_{S_g}^{(t-1)}), \bar{g}_t \rangle \le -\frac{1}{2} \|\nabla \mathcal{L}(W_{S_g}^{(t-1)})\|^2 + \frac{1}{2} \delta_{BCC}^2.
	\end{equation}

	For the second-order term, we decompose the second moment into bias and variance:
	\begin{equation}
		\mathbb{E}[\|g_t\|^2] = \|\bar{g}_t\|^2 + \mathbb{E}[\|g_t - \bar{g}_t\|^2] \le 2\|\nabla \mathcal{L}(W_{S_g}^{(t-1)})\|^2 + 2\delta_{BCC}^2 + \sigma^2.
	\end{equation}
	Substituting these bounds into Eq.\eqref{eq:descent_start} gives:
	\begin{equation}
		\begin{aligned}
			\mathbb{E}[\mathcal{L}(W_{S_g}^{(t)})] &\le \mathbb{E}[\mathcal{L}(W_{S_g}^{(t-1)})] - \frac{\eta}{2}\|\nabla \mathcal{L}(W_{S_g}^{(t-1)})\|^2 + \frac{\eta}{2}\delta_{BCC}^2 
			+ L\eta^2 \|\nabla \mathcal{L}(W_{S_g}^{(t-1)})\|^2 
			+ L\eta^2 \delta_{BCC}^2 + \frac{L\eta^2 \sigma^2}{2} \\
			&\le  \mathbb{E}[\mathcal{L}(W_{S_g}^{(t-1)})]  
			- (\frac{\eta}{2} - L\eta^2  ) \|\nabla \mathcal{L}(W_{S_g}^{(t-1)})\|^2
			+ (\frac{\eta}{2}+L\eta^2)\delta_{BCC}^2
			+ \frac{L\eta^2 \sigma^2}{2}.
		\end{aligned}
	\end{equation}
	
	Since $\eta \le \frac{1}{4L}$, it follows that $\frac{\eta}{2} - L \eta^2 \ge \frac{\eta}{4}$ and  $\frac{\eta}{2} + L \eta^2 \le \frac{3\eta}{4}$. We have
	\begin{equation}
		\begin{aligned}
			\mathbb{E}[\mathcal{L}(W_{S_g}^{(t)})] 
			&\le  \mathbb{E}[\mathcal{L}(W_{S_g}^{(t-1)})]  
			- \frac{\eta}{4} \|\nabla \mathcal{L}(W_{S_g}^{(t-1)})\|^2
			+ \frac{3\eta}{4}\delta_{BCC}^2
			+ \frac{L\eta^2 \sigma^2}{2}.
		\end{aligned}
	\end{equation}
	
	Summing the above inequality from $t=1$ to $T$ yields
	\begin{equation}
		\begin{aligned}
			\mathbb{E}[\mathcal{L}(W_{S_g}^{(T)})]
			\le\;&
			\mathcal{L}(W_{S_g}^{(0)})
			- \frac{\eta}{4}
			\sum_{t=0}^{T-1}
			\mathbb{E}\bigl[\|\nabla \mathcal{L}(W_{S_g}^{(t)})\|^2\bigr] \\
			&\quad
			+ \frac{3\eta T}{4}\delta_{BCC}^2
			+ \frac{L\eta^2 T}{2}\sigma^2 .
		\end{aligned}
	\end{equation}
	
	Using the lower boundedness of $\mathcal{L}$ and rearranging terms, we obtain
	\begin{equation}
		\frac{1}{T}
		\sum_{t=0}^{T-1}
		\mathbb{E}\bigl[\|\nabla \mathcal{L}(W_{S_g}^{(t)})\|^2\bigr]
		\le
		\frac{4(\mathcal{L}(W_{S_g}^{(0)})-\mathcal{L}_*)}{\eta T}
		+ 3\delta_{BCC}^2
		+ 2L\eta\sigma^2 .
	\end{equation}
	
	Consequently,
	\begin{equation}
		\min_{t \in \{0,\dots,T-1\}}
		\mathbb{E}\bigl[\|\nabla \mathcal{L}(W_{S_g}^{(t)})\|^2\bigr]
		\le
		\frac{4(\mathcal{L}(W_{S_g}^{(0)})-\mathcal{L}_*)}{\eta T}
		+ 3\delta_{BCC}^2
		+ 2L\eta\sigma^2 .
	\end{equation}
	
	With a constant step size, BESplit converges to a neighborhood of a stationary point.
	The optimization error decreases at rate $O(1/T)$, whereas the asymptotic error floor is dominated by the label distribution skewness induced bias.
	In contrast to standard aggregation schemes that incur a large bias under non-IID data, the proposed method explicitly suppresses this bias, leading to a markedly tighter convergence neighborhood.

\end{proof}

\section{Implementation Details}
\subsection{Baseline Details} \label{apxE1}

To provide a comprehensive comparison, we evaluate BESplit against a broad range of representative baselines from both FL and SFL.

\paragraph{Classical and advanced FL algorithms:}
\begin{itemize}
	\item \textbf{FedAvg}~\cite{mcmahan2017communication} performs local stochastic gradient updates on each client and aggregates the resulting models via weighted averaging on a central server, enabling collaborative training without sharing raw data.
	
	\item \textbf{FedProx}~\cite{li2020federated} extends FedAvg to handle both system and statistical heterogeneity by adding a proximal term to each client's local objective, penalizing deviations from the global model for more stable and accurate convergence.

\end{itemize}

\paragraph{Recent state-of-the-art FL approaches:}
\begin{itemize}
	\item \textbf{Fed-MoE}~\cite{jiang2025heterogeneous} iteratively updates server-side MoE experts and the gating network using a small reserved dataset, selectively activates relevant experts during inference, and enforces expert diversity through routed experts and a gating entropy loss, thereby addressing non-IID client data.

	\item \textbf{FedRDN}~\cite{yan2025simple} mitigates feature distribution skew by injecting local dataset statistics into augmented samples, exposing clients to broader data distributions. It is lightweight, network-agnostic, and compatible with standard augmentation pipelines to enhance generalization.
	
	\item \textbf{FAVD}~\cite{kumar2025fortifying} evaluates client contributions using local and global data density functions with added Gaussian noise for privacy. This enables auditable updates and robustly mitigates the impact of malicious or skewed data.

\end{itemize}

\paragraph{Recent SFL methods:}
\begin{itemize}
	\item  \textbf{SplitFed}~\cite{thapa2022splitfed} combines FL and SL to enhance privacy and model robustness. By splitting the model between clients and server and incorporating differential privacy and PixelDP, it reduces client computation while maintaining accuracy, suitable for resource-constrained or streaming-data environments.
	
	\item  \textbf{MergeSFL}~\cite{liao2024mergesfl} addresses system and statistical heterogeneity via feature merging and batch-size regulation. Merged features approximate IID mini-batches, while dynamic batch sizes stabilize top-model updates, improving accuracy and training efficiency.
	
	\item \textbf{ParallelSFL}~\cite{liao2024parallelsfl} clusters workers by computing capability and data distribution to reduce waiting time. KL-divergence ensures local data within clusters is near-IID, and diverse update frequencies further enhance convergence and efficiency.
	
	\item \textbf{HASFL}~\cite{lin2026hasfl} jointly optimizes batch size and model split under resource constraints to balance convergence efficiency, training accuracy, and communication-computation latency in heterogeneous edge environments.
\end{itemize}

All baselines are implemented or reconfigured according to their official repositories or original descriptions to ensure fair and reproducible comparisons.

\subsection{Non-IID Data Allocation} \label{apxe2}
To simulate realistic federated scenarios, data is distributed among clients using both IID and non-IID schemes.

\textbf{IID Allocation:} Each of the $K$ clients receives an equal number of randomly selected samples, ensuring identical local data distributions across clients. This provides a baseline for evaluating federated algorithms under balanced conditions.

\textbf{Non-IID Allocation:} Heterogeneous client data is generated via a Dirichlet-based strategy. Let $N$ be the number of classes and $K$ the number of clients. For each class $n$, its samples are shuffled and partitioned among clients according to a Dirichlet distribution with parameter $\alpha$. Smaller $\alpha$ produces more skewed distributions, creating clients with imbalanced or specialized class compositions. Sample counts are rounded and adjusted to ensure all samples are assigned. Formally, the class-$n$ sample proportions for clients are
\[
(p_1, \dots, p_K) \sim \text{Dirichlet}(\alpha, \dots, \alpha),
\]
with corresponding counts corrected to match the total number of class-$n$ samples. This produces realistic non-IID partitions reflecting statistical heterogeneity across clients, a key challenge in SFL.

\subsection{Dataset Details} \label{apxe3}
We evaluate BESplit and the baseline methods on both medical and natural image datasets, covering a range of complexity, class imbalance, and domain heterogeneity to reflect realistic non-IID federated scenarios.

\textbf{ISIC~\cite{ogier2022flamby}}: Approximately 25,000 RGB dermoscopic images with expert annotations across multiple skin lesion categories, including melanoma, nevus, and keratosis. Variations in acquisition devices and clinical sites introduce strong inter-domain heterogeneity, making it a challenging benchmark for federated and split learning.

\textbf{HAM10000~\cite{tschandl2018ham10000}}: 10,015 dermoscopic images across 7 lesion categories with significant class imbalance (115–6,705 samples per class), reflecting realistic clinical heterogeneity in non-IID client distributions.

\textbf{Fashion-MNIST (F-MNIST)~\cite{xiao2017fashion}}: 70,000 grayscale images (28×28) across 10 fashion categories, featuring more complex textures and shapes than MNIST while remaining compact for federated experiments.

\textbf{CIFAR-10~\cite{krizhevsky2009learning}}: 60,000 RGB images (32×32) spanning 10 object classes such as animals and vehicles, suitable for evaluating model generalization under non-IID distributions.

\textbf{CIFAR-100~\cite{krizhevsky2009learning}}: Extends CIFAR-10 to 100 fine-grained classes (600 images per class), with higher intra-class variance and inter-class similarity, offering a more challenging testbed for scalable representation learning in federated settings.

\begin{table*}[t]
	\centering
	\caption{Performance comparison of BESplit-G and BESplit-A against baseline methods across multiple datasets under severe non-IID conditions (with heterogeneity levels $\kappa = 0.1$ and $\kappa = 0.5$). \textbf{Bold} and \textcolor{blue}{\underline{underlined}} numbers indicate the best and second-best results, respectively.}
	\resizebox{\textwidth}{!}{
		\begin{tabular}{lcccccccccc}
			\toprule
			\rowcolor{blue!8} 
			\textbf{Dataset $\rightarrow$} 
			& \multicolumn{2}{c}{\textbf{ISIC}} 
			& \multicolumn{2}{c}{\textbf{HAM10000}} 
			& \multicolumn{2}{c}{\textbf{F-MNIST}} 
			& \multicolumn{2}{c}{\textbf{CIFAR10}} 
			& \multicolumn{2}{c}{\textbf{CIFAR100}} \\
			\cmidrule(lr){2-3} \cmidrule(lr){4-5} \cmidrule(lr){6-7} \cmidrule(lr){8-9} \cmidrule(lr){10-11}
			\rowcolor{gray!8} 
			\textbf{SFL Setting  $\rightarrow$} 
			& \multicolumn{2}{c}{$n$=32, $k$=32} 
			& \multicolumn{2}{c}{$n$=64, $k$=64} 
			& \multicolumn{2}{c}{$n$=100, $k$=40} 
			& \multicolumn{2}{c}{$n$=100, $k$=70} 
			& \multicolumn{2}{c}{$n$=1000, $k$=200} \\
			\cmidrule(lr){2-3} \cmidrule(lr){4-5} \cmidrule(lr){6-7} \cmidrule(lr){8-9} \cmidrule(lr){10-11}
			\rowcolor{yellow!8} 
			\textbf{Method $\downarrow$} 
			& \textbf{$\kappa=0.1$} & \textbf{$\kappa=0.5$} & \textbf{$\kappa=0.1$} & \textbf{$\kappa=0.5$} & \textbf{$\kappa=0.1$} & \textbf{$\kappa=0.5$} & \textbf{$\kappa=0.1$} & \textbf{$\kappa=0.5$} & \textbf{$\kappa=0.1$} & \textbf{$\kappa=0.5$} \\
			\midrule
			FedAvg & 36.19 & 50.21 & 46.08 & 56.63 & 55.10 & 64.27 & 29.19 & 42.77 & 57.97 & 59.80 \\
			FedProx & 38.56 & 51.60 & 46.70 & 57.34 & 55.25 & 65.10 & 28.15 & 42.84 & 58.99 & 60.25 \\
			FedDyn  & 44.27 & 52.12 & 48.30 & 58.82 & 57.72 & 64.98 & 32.92 & 43.42 & 58.25 & 62.32 \\
			Fed-MoE & 46.32 & 55.41 & 49.69 & 59.56 & 60.08 & 65.90 & 37.14 & 46.72 & 60.17 & 63.89 \\
			FedRDN & 46.24 & 56.12 & 49.21 & 60.23 & 59.50 & 66.15 & 37.31 & 46.52 & 61.75 & \textcolor{blue}{\underline{64.93}} \\
			FAVD & 46.79 & 57.69 & 50.79 & 60.31 & 59.18 & 67.96 & 36.19 & 47.35 & 61.25 & 64.30 \\
			SplitFed & 36.82 & 50.32 & 45.88 & 56.34 & 54.52 & 64.65 & 29.54 & 41.99 & 57.96 & 58.86 \\
			MergeSFL  & 42.74 & 54.38 & 48.74 & 58.25 & 57.78 & 66.63 & 33.83 & 44.97 & 60.80 & 62.86 \\
			ParallelSFL& 41.60 & 52.79 & 47.35 & 58.10 & 57.16 & 66.28 & 33.73 & 43.38 & 60.50 & 61.10 \\
			HASFL&42.10&53.28&48.03&57.99&57.85&66.31&33.33&44.09&60.14&61.66\\ 
			\midrule  
			\rowcolor{green!8} 
			\textbf{BESplit-G (ours)} & \textbf{52.19} & \textbf{61.24} & \textbf{56.10} & \textbf{65.63} & \textbf{62.72} & \textbf{69.52} & \textbf{42.22} & \textbf{51.48} & \textbf{63.70} & \textbf{66.82}\\
			\rowcolor{red!8} 
			\textbf{BESplit-A (ours)} & \textcolor{blue}{\underline{49.74}} & \textcolor{blue}{\underline{59.51}} & \textcolor{blue}{\underline{53.74}} &
			\textcolor{blue}{\underline{62.51}} & \textcolor{blue}{\underline{60.78}} & \textcolor{blue}{\underline{67.99}} & \textcolor{blue}{\underline{40.29}} & \textcolor{blue}{\underline{49.97}} & \textcolor{blue}{\underline{62.20}} & 64.23 \\ 
			\bottomrule
	\end{tabular}}
	\label{table:severe-Non-IID}
\end{table*}

\section{Additional Experiments}
We present supplementary experiments that extend and complement the empirical analyses in the main text.

\subsection{Comparative Evaluation Under Severe Non-IID Distributions} \label{apxf2}
To systematically assess the robustness of BESplit under highly heterogeneous data, we conducted experiments with Dirichlet parameters of $\kappa = 0.1$ and $\kappa = 0.5$, compared against ten state-of-the-art baseline models across five datasets. This setup evaluates how BESplit behaves as data heterogeneity increases, reflecting challenging real-world scenarios. Detailed results are reported in Table~\ref{table:severe-Non-IID}.

Both BESplit variants, BESplit-G and BESplit-A, consistently outperform classical FL methods and recent state-of-the-art FL or SFL approaches across all datasets and heterogeneity levels, highlighting their robustness against performance degradation under non-IID distributions. Importantly, the relative advantage of BESplit grows as data heterogeneity intensifies. This pattern indicates that the framework not only withstands the challenges posed by skewed client distributions but actively leverages the collaborative split architecture to maintain stable learning. In extreme non-IID scenarios, where many baseline methods experience significant performance drops, BESplit-G continues to achieve substantial improvements, demonstrating strong adaptability and resilience. BESplit-A, while more compact and fully supporting local inference, preserves a substantial portion of this advantage, effectively mitigating heterogeneity effects and providing a practical solution for resource-constrained client devices.

\subsection{BMR-Based Reliability Analysis}
To further evaluate reliability and robustness, we measured the BMR across varying levels of data heterogeneity on the medical datasets HAM10000 and ISIC, focusing on comparison against the strongest baseline FAVD. BMR captures the consistency and safety of predictions under non-IID data distributions.

As shown in Fig.~\ref{fig:BMR_else}, BESplit-G consistently achieves the lowest BMR across both datasets, indicating superior reliability in medical prediction tasks. Notably, the performance gap between BESplit-G and FAVD widens as data heterogeneity increases: whereas FAVD exhibits substantial deterioration in BMR under extreme non-IID scenarios, BESplit-G maintains stable and low misclassification rates, reflecting strong resilience to skewed client distributions.

\begin{figure}[!ht]
	\centering
	
	\begin{minipage}{0.35\textwidth}
		\centering
		\includegraphics[width=\textwidth]{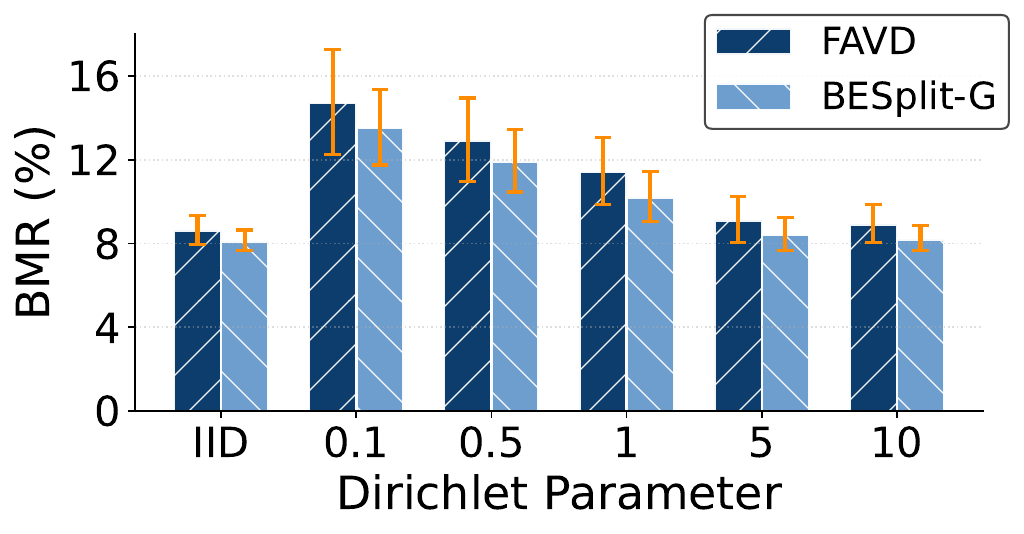}
		\captionsetup{labelformat=empty}
		(a) HAM10000
	\end{minipage}
	\hspace{5mm}
	\begin{minipage}{0.35\textwidth}
		\centering
		\includegraphics[width=\textwidth]{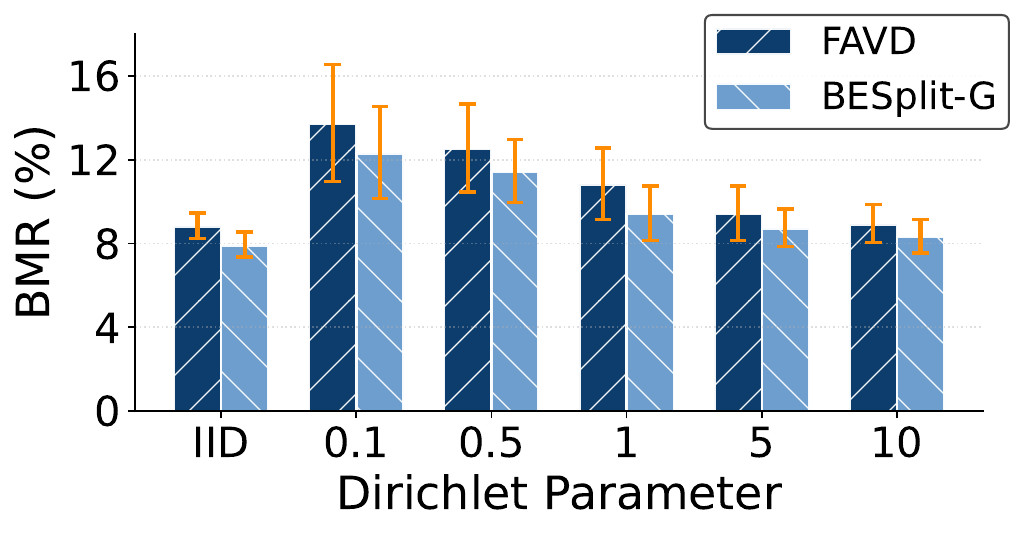}
		\captionsetup{labelformat=empty}
		(b) ISIC
	\end{minipage}
	\caption{BMR comparison under varying degrees of data heterogeneity on medical datasets (HAM10000 and ISIC).}
	\label{fig:BMR_else}
\end{figure}

\subsection{Robustness Against Extreme Data Heterogeneity}
The training behaviors of BESplit-G and BESplit-A under increasing levels of non-IID data heterogeneity are shown in Fig.~\ref{fig:BESplit_NonIID}. As data heterogeneity intensifies, both methods exhibit more pronounced fluctuations in their training trajectories, reflecting the substantial impact of uneven label distributions on optimization stability, particularly under extreme skew.
BESplit-G maintains comparatively smooth curves and keeps performance variations within a narrow band even at high non-IID levels. This stability arises from its direct access to global knowledge throughout training, which effectively regularizes the updates and suppresses gradient noise.

In contrast, BESplit-A displays larger oscillations. This behavior is primarily attributable to the limited expressive capacity of its lightweight auxiliary model: under severe distribution skew, the model becomes more susceptible to gradient variability, causing the Non-IID–induced noise to be amplified across updates. Although the dual-teacher distillation mechanism alleviates part of this instability by blending both local and global supervisory signals, the resulting trajectories remain visibly less stable than those of BESplit-G. 

\begin{figure}[!ht]
	\centering
	\begin{minipage}{0.35\textwidth}
		\centering
		\includegraphics[width=\textwidth]{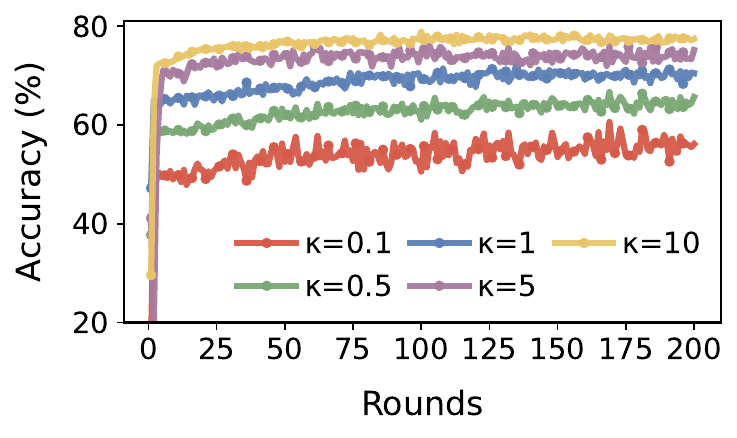}
		\captionsetup{labelformat=empty}
		(a) BESplit-G 
	\end{minipage}
	\hspace{5mm}
	\begin{minipage}{0.35\textwidth}
		\centering
		\includegraphics[width=\textwidth]{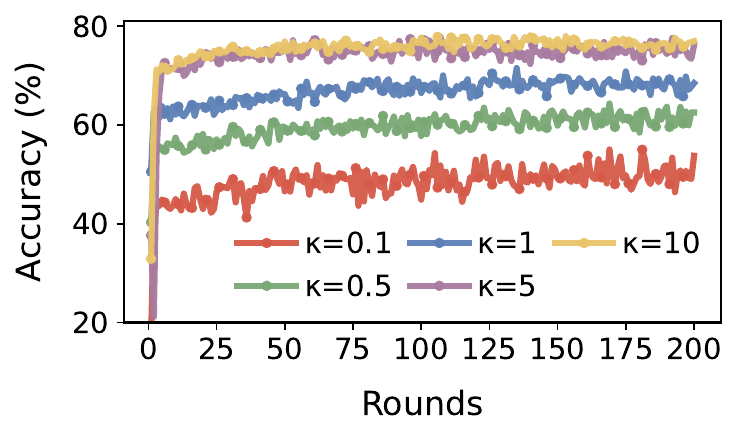}
		\captionsetup{labelformat=empty}
		(b) BESplit-A
	\end{minipage}

	\caption{Training curves of BESplit-G and BESplit-A under varying non-IID level on the HAM10000 dataset.}
	\label{fig:BESplit_NonIID}
\end{figure}

\subsection{Model Size Evaluation} \label{apxf1}

As shown in Table~\ref{tab:model_params}, the auxiliary model is intentionally designed to be highly lightweight. Its parameter scale is only slightly larger than that of the client-side model and remains orders of magnitude smaller than the global model. As a result, maintaining the auxiliary model introduces only a relatively small overhead, reinforcing its practicality for deployment on resource-constrained client devices within BESplit.

\begin{table*}[!ht]
	\centering
	\caption{Parameter counts for different model configurations within the BESplit system.}
	\label{tab:model_params}
	\resizebox{0.65\textwidth}{!}{
		\begin{tabular}{c|c|c|c}
			\hline
			\rowcolor{blue!8} 
			\textbf{Backbone} & \textbf{Client-side Model} & \textbf{Auxiliary Model} & \textbf{Global Model} \\
			\hline
			DenseNet121  & 344.5K & 347.1K & 8.0M \\
			ResNet18  &  83.6K& 84.2K & 11.7M \\
			ResNet50 & 120.1K & 120.7K & 25.5M \\
			\hline
	\end{tabular}}
\end{table*}

\end{document}